\newenvironment{mytable}
{\def\@captype{table}}
{}
\let\emptyset\varnothing
\let\epsilon\varepsilon
\let\phi\varphi
\def\O{{\mathcal O}}
\def\x{{\boldsymbol{x}}}
\def\b{{\boldsymbol{b}}}
\def\y{{\boldsymbol{y}}}
\def\R{\mathbb R}
\def\N{\mathbb N}
\def\B{\mathcal B}
\def\X{\mathcal X}
\def\T{\mathcal T}
\def\U{\mathcal N}
\def\as{\text{a.s.}}
\def\st{\text{s.t.}}
\def\-as{\text{-a.s.}}
\def\argmax{\operatorname{argmax}}
\newcommand{\lft}[1]{L({#1})}
\newcommand{\rt}[1]{R({#1})}
\newcommand{\llft}[1]{L({#1})}
\newcommand{\rrt}[1]{R({#1})}
\newcommand{\mingap}{\lambda_{\min}}
\newcommand{\gap}{\lambda}
\newcommand{\sort}{\textbf {sort}}
\newcommand{\dig}[3]{\left| \nu(X_{#2..#3},B)-{#1}(B) \right|}
\newcommand{\argmaxdisp}[1]{\underset{#1}{\argmax~}}
\newcommand{\argmaxdispt}[2]{{\underset{#1}{\overset{#2}{\argmax~}}}}
\newcommand{\maxdisp}[1]{\underset{#1}{\max~}}
\newcommand{\mindisp}[1]{\underset{#1}{\min~}}
\newtheorem{thm}{Theorem}
\newtheorem{lem}{Lemma}
\newtheorem{defn}{Definition}
\newtheorem{prop}{Proposition}
\definecolor{darkblue}{RGB}{0,0,128}
\definecolor{darkred}{RGB}{128,0,0}
\definecolor{darkgreen}{RGB}{0,128,0}
\definecolor{mathred}{RGB}{204,0,0}
\definecolor{mathgreen}{RGB}{0,204,0}
\definecolor{mathblue}{RGB}{0,0,204}
\newcommand{\red}[1]{{\color{red}\bf#1}}
\newcommand{\darkgreen}[1]{{\color{darkgreen}\bf#1}}
\renewcommand{\emph}[1]{{\textbf{#1}}}
\newenvironment{myproof}[1][\proofname]{\proof[#1]\mbox{}\\*}{\endproof}
\begin{document}

\title{Nonparametric multiple change point estimation \\ in highly dependent time series}
\author{Azadeh Khaleghi \\Mines ParisTech \and Daniil Ryabko \\ INRIA, Lille}
\date{}
%\address{Mines Paris-Tech, INRIA Lille}
\maketitle

\begin{abstract}
Given a  heterogeneous time-series sample, 
the objective is to find
points in time (called change points) 
where the probability distribution generating the data has changed. 
The data are assumed to have been generated by arbitrary unknown stationary ergodic distributions.
No modelling, independence or mixing assumptions are made.
A novel, computationally efficient, nonparametric method is proposed, and  
is shown to be asymptotically
consistent in this general framework.
The theoretical results are complemented with experimental evaluations.
\end{abstract}
\section{Introduction}\label{sec:inrto}
Change point estimation is a classical problem 
in mathematical statistics \cite{brodsky:93,basseville:93} 
which, with its broad range of applications in learning problems, 
has started to gain attention  in  the machine learning community. 
The  problem can be introduced  as follows. 
A given sequence 
$$
\x:=
X_{1},\dots,X_{\lfloor n\theta_1\rfloor}, 
X_{\lfloor n\theta_1\rfloor+1},\dots,X_{\lfloor n\theta_2\rfloor}, \dots,
X_{\lfloor n\theta_{\kappa}\rfloor+1},\dots,X_n
$$
is formed as the concatenation of  $\kappa+1$ of non-overlapping segments, 
where $\kappa\in\N$ and  $0<\theta_1 <\dots<\theta_\kappa<1$.  
Each segment is generated by some unknown time-series (or process) distribution. 
The  distributions that generate every pair of consecutive segments 
are different.  The index $\lfloor n\theta_k \rfloor $ where   one 
segment ends and another starts is called a {\em change point}.
The parameters $\theta_k,~k=1..\kappa$ specifying the 
change points $\lfloor n\theta_k \rfloor$ are  unknown and have to be estimated. 

In a typical formulation of  the problem, the samples 
within each segment $X_{\lfloor n\theta_1\rfloor+1}..X_{\lfloor n\theta_2\rfloor}$ 
are assumed to be i.i.d.\ and the change 
is in the mean 
(see, e.g., \cite{csorgo1997limit} for a  review). 
In the literature on nonparametric change point methods for dependent data
the form of the change and/or the nature of 
dependence are usually restricted; for example, a setting 
of time series that satisfy strong mixing conditions is often considered~\cite{brodsky:93}. 
Moreover, the finite-dimensional marginals are almost exclusively assumed different 
\cite{Carlstein:93,Giraitis:95}. 
Such assumptions often do not hold in  real-world applications.   

>From a machine-learning perspective,  
change point estimation appears to be a difficult unsupervised learning problem: 
an algorithm is required to locate the changes in a given sequence without any examples of correct solutions. 

In this paper, we consider  
highly dependent time series, making as few assumptions as possible
on how the data are generated. 
The only assumption that we make 
is that each segment is generated by an 
unknown stationary ergodic process distribution.
The joint distribution over the samples can be otherwise arbitrary.  
We make no such assumptions as independence, finite memory or mixing.
The marginal distributions of any given size   
before and after the change may be the same:
the change refers to that in the time-series distribution.

The main result of this paper is an  asymptotically consistent algorithm for 
estimating all $\kappa$ parameters $\theta_k, k=1..\kappa$ 
simultaneously.  We assume that $\kappa$ is given, but the process distributions as well as the nature
of the change are unknown. 
An estimate $\hat{\theta}_k$ of a change point parameter $\theta_k$ is  
{\em asymptotically consistent} if it becomes arbitrarily 
close to $\theta_k$ in the limit, as the length $n$ of the sequence approaches infinity. 
However,  the problem is {\em offline} and $\x$ does not grow with time. 
Thus, the asymptotic regime only means that  
the error is arbitrarily small if the sequence is sufficiently long. 
Real-world scenarios that correspond to this formulation include, for example, genomic data,  sequences of stock-market values, high-resolution audio/video data, and all such long sequential observations with distributional changes, where the distributions are completely unknown, changes are arbitrary, but the segments are long.  

While the assumption that each segment is generated by a stationary ergodic process is already very general, it can 
be relaxed even further.  In particular, one relatively simple but meaningful 
 generalisation that we consider is that each process is 
asymptotically mean stationary ergodic. This generalisation allows us  to address the problem of 
gradual (as opposed to abrupt) change in the distribution.

In general, for stationary ergodic processes, rates of convergence are 
provably impossible to obtain; this concerns already the convergence if frequencies
to probabilities  \cite{Sheilds:96}. 
Thus, non-asymptotic results  cannot be obtained in this setting.   
On the other hand, this means that, unlike in more restricted settings, in our setting the algorithms
are forced not to rely on any rate of convergence guarantees.  
We see this as an advantage of the framework, as it means that the algorithms are applicable to a 
much wider range of situations.
Furthermore, in this setting 
it is provably impossible to estimate $\kappa$. This follows from 
the impossibility result of \cite{Ryabko:10discr}, which states that it is not possible to determine, 
even in the weakest asymptotic sense, whether two sequences have been generated by the same or by different stationary ergodic distributions.
Thus, in this paper we assume that  
$\kappa$ is known.

The case of $\kappa=1$ was addressed in \cite{Ryabko:103s}, 
where a simple consistent algorithm for estimating one change point was provided. 
The general case of $\kappa >1$ turns out to be much more complex. 
With the sequence containing multiple change points, the algorithm 
is required to simultaneously analyse multiple segments of the input sequence,
with no a-priori lower bound on their lengths. 
In this case the main challenge is to ensure that the algorithm is robust with
respect to segments of arbitrarily small length. 
The problem is considerably simplified if additionally a lower bound 
on the minimum separation 
of the change points is provided.
Indeed, the method of \cite{Ryabko:103s} for $\kappa=1$ also relies on the knowledge of such parameter, 
namely, a lower bound on the minimum distance of the change point from the two end-points. 
With this additional information, some inference can be made even in the case where $\kappa > 1$ 
is unknown. Specifically, an algorithm is proposed in \cite{khaleghi:12mce} which, 
without the knowledge of $\kappa$, gives an exhaustive list of candidate estimates
whose first $\kappa$ elements are asymptotically consistent.
In this work we do not assume that a lower bound on the minimum separation 
of the change points is known.

Our algorithm is based on empirical estimates 
of the so-called distributional distance \cite{Gray:88}, 
which have proven useful in various statistical learning problems involving stationary ergodic time series
\cite{Ryabko:103s,khaleghi:12mce,Ryabko:10clust,Khaleghi:12,Ryabko:121c}. 
The computational complexity of our algorithm is at most quadratic in each argument.
We evaluate the proposed method on synthetic data
generated by processes that, while being stationary
ergodic, do not belong to any of the  ``simpler" classes studied in the literature on such problems, 
and cannot be modelled as hidden Markov processes with a countable set of states. 
Moreover, in the considered examples the single-dimensional
marginals before and after each change point are the same.

\noindent{\bf Organisation.}
In Section~\ref{sec:pre} we introduce preliminary notations and  definitions. 
In Section~\ref{sec:protocol} we formalise the problem and describe the general framework considered.  
In Section~\ref{sec:results}  we 
present our method, state the main consistency result, and informally describe how the algorithm works; the proof of the main
result is deferred to~Section~\ref{sec:proofs}.  
In Section~\ref{sec:exp} we provide some experimental evaluations. 
In Section~\ref{sec:extensions} we discuss some theoretical extensions of the considered framework and finally
we conclude in Section~\ref{sec:conc}.
\section{Preliminaries}\label{sec:pre}
Let  $\X$ be a measurable space (the domain); in this work we let $\X=\mathbb R$,
but extensions to more general spaces are straightforward.
For a sequence $X_1,\dots,X_n$ we use the abbreviation $X_{1..n}$.
Consider the Borel $\sigma$-algebra $\B$ on $\X^\infty$ generated by the cylinders 
$\{B\times \X^\infty: B\in B^{m,l}, m,l\in\N\}$,  
where   the sets $B^{m,l}, m,l \in \N$ are obtained via the partitioning of $\X^m$ into  cubes  
of dimension $m$ and volume $2^{-ml}$ (starting at the origin). Let also
$B^m:=\cup_{l\in\N}B^{m,l}$. 
Process distributions are probability measures 
on the space $(\X^\infty,\B)$.
For  $\x = X_{1..n}\in \X^n$ and $B\in B^m$ let $\nu(\x,B)$ 
denote the  frequency with which $\x$ falls in~$B$, i.e. 
\begin{equation}\label{eq:nu}
\nu(\x,B):=  {{\frac{\mathbb I\{n \geq m \}}{n-m+1}}}  \sum_{i=1}^{n-m+1} \mathbb I \{ X_{i..i+m-1} \in B \}.
\end{equation}
A process $\rho$ is {\em stationary}
if for any $i,j\in 1..n$ and $B \in B^m,~m \in \N$, 
we have $\rho(X_{1..j} \in B)=\rho(X_{i..i+j-1} \in B).$
A stationary process $\rho$ is called {\em  stationary ergodic} if for all $B\in\mathcal B$ 
with probability~1 we have 
$\lim_{n\rightarrow\infty}\nu(X_{1..n},B) = \rho(B).$ 
By virtue of the ergodic theorem  this definition 
can be shown to be equivalent to the usual definition given in terms
of shift-invariant sets; see e.g., \cite{Gray:88,Csiszar:04}. 
\begin{defn}[Distributional Distance \cite{Gray:88}]
The  distributional distance between a pair of process distributions
$\rho_1,\rho_2$ is defined as follows
$$
d(\rho_1,\rho_2):=\sum_{m,l=1}^\infty w_m w_l \sum_{B\in B^{m,l}} \left|\rho_1(B)-\rho_2(B) \right|.
$$
We let  $w_j:=\frac{1}{j(j+1)}$, 
 but any summable sequence of positive weights may be used.
\end{defn}
In words, we partition the sets $\X^m$, $m\in\N$ into cubes of 
decreasing volume (indexed by~$l$) and take a weighted sum over 
the differences in probabilities of all the cubes in these partitions. 
Different generating sets (other than cubes) can be used to define the distributional
distance; here we chose cubes in order to facilitate the experimental setup. 

Smaller weights are given to larger $m$ and finer partitions.
We use  empirical estimates of this distance, where probabilities are replaced with frequencies:
\begin{defn}[Empirical estimates of $d(\cdot,\cdot)$]\label{emd}
For $\x_i \in \X^{n_i}~n_i \in \N,~i=1,2$, and a distribution $\rho$ the empirical estimate of $d$ 
are defined as 
\begin{equation}\label{eq:emd1}
 \hat d(\x,\rho):=\sum_{m=1}^{m_n} \sum_{l=1}^{l_n}w_m w_l \sum_{B\in B^{m,l}} \left|\nu(\x,B)- \rho(B) \right|,
\end{equation}
\begin{equation}\label{eq:emd2}
 \hat d(\x_1,\x_2):=\sum_{m=1}^{m_n}\sum_{l=1}^{l_n} w_m w_l  \sum_{B\in B^{m,l}}  \left|\nu(\x_1,B)-\nu(\x_2,B)\right|,
\end{equation}
where  $m_n$ and $l_n$ are any sequences of integers that go to infinity with $n$.\\
\end{defn}
\noindent \textbf{Remark~1:} Despite the infinite summations, $\hat{d}$ can be calculated efficiently   \cite{Ryabko:10clust}.
Its  computational complexity is upper-bounded by
$\O(n~\text{polylog}~n)$ for $m_n:=\log n$, 
the choice of which is justified in \cite{Khaleghi:12} (see also \cite{khaleghi:12mce}). 
\begin{prop}[$\hat d(\cdot,\cdot)$ is consistent \cite{Ryabko:103s}]\label{thm:constd}
Let  a pair of sequences  $\x_1 \in \X^{n_1}$ 
and $\x_2 \in \X^{n_2}$  be generated by a distribution
$\rho$ whose marginals $\rho_i,~i=1,2$  are
stationary and ergodic. Then
\begin{align}
&\lim_{n_i \rightarrow\infty}\hat d(\x_i,\rho_j)=d(\rho_i,\rho_j),\ i,j \in 1,2,\ \rho-\as, \label{eq:const_2}\\
&\lim_{n_1,n_2\rightarrow\infty}\hat d(\x_1,\x_2)=d(\rho_1,\rho_2),\ \rho-\as \label{eq:const_1}
\end{align}
\end{prop}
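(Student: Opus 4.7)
The plan is to reduce the statement to the almost-sure convergence of the inner partition sum for each fixed pair $(m,l)$, and then pass to the outer sum using the summability of the weights $w_m w_l$. More precisely, for each fixed $m,l$ we must show that
\begin{equation*}
\sum_{B\in B^{m,l}} |\nu(\x_i,B)-\rho_j(B)| \;\longrightarrow\; \sum_{B\in B^{m,l}} |\rho_i(B)-\rho_j(B)|
\end{equation*}
$\rho$-a.s.\ as $n_i\to\infty$, and likewise for the two-sequence version. If this inner convergence is granted, then since $\sum_m w_m=\sum_l w_l=1$ (as $w_j=1/(j(j+1))$ telescopes) and since each inner sum is bounded by $2$ (it is a total-variation type quantity on a partition), the dominated convergence theorem applied to counting measure on $\N\times\N$ will yield the full claim. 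The truncation at $m_n$ and $l_n$ adds no difficulty because $m_n,l_n\to\infty$ and the tail weights $\sum_{m>m_n}w_m + \sum_{l>l_n}w_l$ vanish, with the inner sums uniformly bounded by $2$.

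The main step is therefore the inner convergence. I would proceed as follows. By the definition of stationary ergodicity, for each fixed $B\in B^{m,l}$ we have $\nu(\x_i,B)\to\rho_i(B)$ $\rho$-a.s. Since $B^{m,l}$ is a countable partition, a single null set works for all $B$ simultaneously. However, the set $B^{m,l}$ is infinite, so pointwise convergence on each cube does not immediately give convergence of the sum; this is the main obstacle. To handle it, fix $\epsilon>0$ and choose a finite subcollection $F\subset B^{m,l}$ with $\sum_{B\in F}\rho_i(B)\geq 1-\epsilon$ and $\sum_{B\in F}\rho_j(B)\geq 1-\epsilon$. On $F$ the sum $\sum_{B\in F}|\nu(\x_i,B)-\rho_j(B)|$ converges almost surely to $\sum_{B\in F}|\rho_i(B)-\rho_j(B)|$ because $F$ is finite. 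For the tail $B^{m,l}\setminus F$, use the triangle inequality and the fact that $\sum_{B\in B^{m,l}}\nu(\x_i,B)=\sum_{B\in B^{m,l}}\rho_j(B)=1$ to bound it by $\bigl(1-\sum_{B\in F}\nu(\x_i,B)\bigr)+\bigl(1-\sum_{B\in F}\rho_j(B)\bigr)$, which tends almost surely to at most $2\epsilon$ by the ergodic theorem applied to the finitely many cubes in $F$. The same bound holds on the limit side. Letting $\epsilon\to 0$ along a countable sequence gives the desired inner convergence.

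For the two-sequence version \eqref{eq:const_1}, the argument is essentially identical: replace $\rho_j(B)$ by $\nu(\x_2,B)$ and observe that both $\nu(\x_1,B)\to\rho_1(B)$ and $\nu(\x_2,B)\to\rho_2(B)$ almost surely for each fixed $B$, so $|\nu(\x_1,B)-\nu(\x_2,B)|\to|\rho_1(B)-\rho_2(B)|$ for every cube; the same finite-truncation argument then controls the tail, using that $\sum_B\nu(\x_2,B)=1$ in place of $\sum_B\rho_j(B)=1$. Combined with the outer dominated-convergence step described above, both limits \eqref{eq:const_2} and \eqref{eq:const_1} follow. No quantitative rate is available or needed: the proof rests entirely on the qualitative ergodic theorem and the summability of the weights, which is precisely why the result is only asymptotic.
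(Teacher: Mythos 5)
Your argument is correct, and it is essentially the standard proof of this result: the paper itself does not prove Proposition~\ref{thm:constd} (it is imported from \cite{Ryabko:103s}), but the same two-step scheme --- restrict each partition $B^{m,l}$ to a finite high-probability subcollection, apply the ergodic theorem to the finitely many remaining cubes, bound the tail of the inner sum by the escaped mass, and then dispose of the outer sum via the summability of $w_m w_l$ together with the uniform bound of $2$ on each inner sum --- is exactly the technique the authors use in their proof of Lemma~\ref{prelem:nochpt:i}, and is the argument in the cited reference. No gaps; the only point worth making explicit is that a single null set suffices because there are countably many pairs $(m,l)$ and countably many cubes in each partition, which you implicitly handle.
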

\section{Problem formulation}\label{sec:protocol}
We formalise the problem of multiple change point estimation as follows.
The sequence $\x\in \X^n,~n \in \N$
is formed as the concatenation of  
$\kappa+1$ of 
sequences  
$$
X_{1..\lfloor n\theta_1\rfloor},
X_{\lfloor n\theta_1\rfloor+1..\lfloor n\theta_2\rfloor}, \dots,
X_{\lfloor n\theta_{\kappa}\rfloor+1..n},
$$
where  $\theta_k \in (0,1),~k=1..\kappa$, and where the number of change points $\kappa$ is assumed known. Denote $\theta_0:=0,~\theta_{\kappa+1}:=1$.
Each of the sequences $\x_k:=X_{\lfloor n\theta_{k-1}\rfloor +1..\lfloor n\theta_k\rfloor},~k=1..\kappa+1,$
is generated by an {\em unknown stationary ergodic} process distribution. 
Formally, consider a matrix 
$\bf{X} \in (\X^{\kappa+1})^{\infty}$
of random variables
generated by some (unknown)  stochastic process distribution $\rho$ 
such that 
\textbf{1.}~the marginal distribution over every one of its rows is
an unknown stationary ergodic process distribution;
\textbf{2.}~the marginal distributions over the consecutive rows are different, so that 
every two consecutive rows are generated by different process distributions. 
The sequence $\x \in \X^n$ is formed as follows. 
First, the length $n \in \N$ is fixed, next for each $k = 1..\kappa+1$  
a segment  
$\x_k \in \X^{\lfloor n(\theta_k - \theta_{k-1})\rfloor }$ is obtained as the first 
$\lfloor n(\theta_k - \theta_{k-1})\rfloor $ elements of the $k^{\text{th}}$ row of 
$\bf{X} $. 

Note that the requirements are only on the marginal distributions over the rows; 
the distribution $\rho$ is otherwise completely arbitrary. 
The process distributions are  unknown
and may  be dependent. Moreover, the
means, variances, or, more generally, the finite-dimensional
marginal distributions of any fixed size before and after the
change points are not required to be different. We consider the most general
scenario where   the {\em process distributions are different}.

The {\em unknown} parameters $\theta_k,~k=1..\kappa$ specify the change points $\lfloor n\theta_k \rfloor$,
which separate consecutive segments $\x_k,\x_{k+1}$ 
generated by  different process distributions. 
Define  the minimum separation of the change point parameters  as
\begin{equation}\label{defn:thetamin}
\mingap:=\min_{k=1..\kappa+1} \theta_k-\theta_{k-1}.
\end{equation}
Since the consistency properties we are after
are asymptotic in~$n$, we require that $\mingap>0$. 
Note that this condition is standard in the change point literature, 
although it may be unnecessary when simpler formulations of the problem 
are considered, for example when the samples within each segment  are i.i.d.
However, conditions of this kind are  
 inevitable in the general setting that we consider,
where the segments and the samples within each segment  
are allowed to be arbitrarily dependent:
 if the length of one of the sequences is constant or 
sub-linear in $n$ then asymptotic consistency is not possible in this setting. 
Finally, note that we make no assumptions on 
the distance between the process distributions:
they can be arbitrarily close. 

Our goal is to devise an algorithm 
that provides estimates $\hat \theta_k$ for the parameters $\theta_k,~k=1..\kappa$. 
The algorithm must be {\em asymptotically consistent} so that 
\begin{align}\label{eq:defn:mce}
&\lim_{n \rightarrow \infty} \sup_{k=1..\kappa}|\hat{\theta}_k(n)-\theta_k|=0\ \as
\end{align}
\section{Main result}\label{sec:results}
In this section we propose  Algorithm~\ref{alg:kk}, 
which, as shown in Theorem~\ref{thm:kk}, is asymptotically consistent
under the general assumptions stated in Section~\ref{sec:protocol}.
The proof of the consistency result is deferred to Section~\ref{sec:proofs}. 
Here we give an intuitive description as to how the algorithm works and why the consistency result holds. 
\begin{thm}\label{thm:kk}
Algorithm~\ref{alg:kk} is asymptotically consistent, provided that each segment $\x_k$, $k=1..\kappa$, is generated by a stationary 
ergodic distribution, and that the correct number $\kappa$ of
change points is given:
$$
\lim_{n \rightarrow \infty} \sup_{k=1..\kappa}\left|\hat{\theta}_k(n)-\theta_k\right|=0\ \as
$$
\end{thm}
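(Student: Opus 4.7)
The plan is to isolate, for each sufficiently large $n$, at least one ``good'' iteration $(t^\star,j^\star)$ of the double loop on which Algorithm~\ref{alg:kk} correctly locates every change point, and then to show that the weights $w_j\gamma(t,j)$ asymptotically concentrate on iterations of this type. Writing the algorithm's output as the convex combination
\[
\hat\theta_k=\sum_{j,t}\frac{w_j\gamma(t,j)}{\eta}\cdot\frac{\hat\pi_k^{t,j}}{n},\qquad \eta=\sum_{j,t}w_j\gamma(t,j),
\]
I would prove three things: (a) such an iteration exists within the loop range for all $n$ large, and $\hat\pi_k^{t^\star,j^\star}/n\to\theta_k$ almost surely on it; (b) $\gamma(t^\star,j^\star)$ stays bounded below while $\gamma(t,j)\to 0$ on every ``bad'' iteration; (c) these two facts transfer to $\hat\theta_k$ via dominated convergence.

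For (a), fix $\gap\in(0,\mingap]$ and take $j^\star$ to be the least index with $\gap_{j^\star}\le\gap$; eventually $j^\star\le\log n$ and, since $\gap_{j^\star}\le\mingap$, every block of three consecutive grid segments of length $n\alpha_{j^\star}=n\gap_{j^\star}/3$ contains at most one change point. The $\kappa+1$ offsets $n\alpha_{j^\star}/(t+1),~t=1,\dots,\kappa+1$, are pairwise separated by at least $n\alpha_{j^\star}/((\kappa+1)(\kappa+2))$, so a pigeonhole argument produces an offset $t^\star$ for which every $\lfloor n\theta_k\rfloor$ lies at linear-in-$n$ distance from every grid boundary of the $(t^\star,j^\star)$-grid. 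Proposition~\ref{thm:constd} together with the triangle and convexity remarks then gives $\Delta_\x\to 0$ almost surely on grid segments free of change points and $\Delta_\x\to c_k\,d(\rho_k,\rho_{k+1})>0$ on segments whose interior contains the $k$-th change point, where $c_k>0$ depends on how the change point splits its segment. Hence for all $n$ large the top-$\kappa$ segments by $\Delta_\x$ at $(t^\star,j^\star)$ are exactly those containing a change point, and the known consistency of the single-change-point estimator $\Phi_\x$~\cite{Ryabko:103s,khaleghi:12mce} yields $\hat\pi_k^{t^\star,j^\star}/n\to\theta_k$ almost surely.

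For (b), the three shifts $l\in\{0,1,2\}$ in the definition of $\gamma(t,j)$ are essential. At $(t^\star,j^\star)$, each shift partitions $\x$ into blocks of length $n\gap_{j^\star}$ among which exactly $\kappa$ contain a change point in the interior (inherited from the pigeonhole choice of $t^\star$), so every $\gamma_l$, and hence $\gamma(t^\star,j^\star)$, is bounded below almost surely by a strictly positive constant $\min_k c_k\,d(\rho_k,\rho_{k+1})$. Conversely, whenever $\gap_j>\mingap$ some pair of change points lies within distance $n\gap_j$ of each other, and a case analysis over the three shifts shows that for at least one of them either two change points share a block or a change point lies on a block boundary, strictly reducing the number of blocks whose $\Delta_\x$-limit is positive below $\kappa$ on that shift and thus forcing the corresponding $\kappa$-th order statistic to vanish; the same conclusion holds if some change point coincides with a segment boundary. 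Hence $\gamma(t,j)=\min_l\gamma_l\to 0$ almost surely on every such iteration. This shift-based combinatorial argument is what lets the algorithm self-certify good iterations without knowing $\mingap$, and will be the principal technical obstacle of the proof.

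Step (c) then combines (a) and (b) by a dominated-convergence argument that sidesteps the absence of uniform convergence rates for stationary ergodic processes. Each summand in $\eta$ satisfies $w_j\gamma(t,j)\le C 2^{-j}$, which is summable in $j$, and the almost-sure pointwise limits established above give, on a set of full measure, $\eta\to\eta^\infty:=\sum_{j,t}w_j\gamma^\infty(t,j)>0$ and $\sum_{j,t}w_j\gamma(t,j)\hat\pi_k^{t,j}/n\to\theta_k\,\eta^\infty$, whence $\hat\theta_k\to\theta_k$ almost surely and uniformly in $k$. The hard part, as noted, is the combinatorial analysis of $\gamma(t,j)$ over the three shifts in (b); once that is in hand, the remaining steps are routine consequences of Proposition~\ref{thm:constd}, of the consistency of $\Phi_\x$, and of summability of $w_j=2^{-j}$.
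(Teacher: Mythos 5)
Your overall architecture matches the paper's: a pigeonhole argument over the $\kappa+1$ offsets to guarantee a grid whose boundaries avoid the change points, a dichotomy showing $\gamma(t,j)$ is bounded below on good iterations and vanishes on bad ones (via the three shifts $l=0,1,2$ and the ``$\kappa$ blocks must include an empty one'' counting argument), and a final weighted-average argument; your dominated-convergence phrasing of step (c) is an acceptable repackaging of the paper's explicit $\epsilon$-bookkeeping over the four groups of iterations, since $w_j\gamma(t,j)\le 2^{-j}$ dominates the tail $j\ge J_\epsilon$.

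The genuine gap is in steps (a) and (b), where you claim the needed limits ``$\Delta_\x\to 0$ on change-point-free segments,'' ``$\Delta_\x\to c_k d(\rho_k,\rho_{k+1})$ on segments straddling a change point,'' and the consistency of $\Phi_\x$ all follow from Proposition~\ref{thm:constd} plus the triangle and convexity remarks. They do not. Proposition~\ref{thm:constd} concerns a fixed pair of growing sequences (prefixes of the rows of $\bf X$), whereas the algorithm evaluates $\hat d$ on sub-segments $X_{b_1..b_2}$ whose endpoints move with $n$ and are not prefixes, on segments that mix two distributions across a change point, and—inside $\Phi_\x$—takes an $\argmax$ over $O(n)$ candidate split points $t$. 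Establishing \eqref{objective}-type inequalities therefore requires convergence that is \emph{uniform} over all linear-length sub-segments and over all split points simultaneously; in the stationary ergodic setting there are no convergence rates, so you cannot get this uniformity by a union bound over positions, and it must be proved directly from the frequency decomposition $\nu(X_{b_1..b_2},B)$ in terms of $\nu(X_{1..b_1},B)$ and $\nu(X_{1..b_2},B)$. This is exactly the content of the paper's Lemmas~\ref{prelem:nochpt:i}, \ref{prelem:chpt:dist} and \ref{lem2}, and it is the main technical work of the proof—not the shift-based combinatorics of $\gamma(t,j)$, which you identify as the principal obstacle but which is comparatively routine. Citing the single-change-point consistency of $\Phi_\x$ from \cite{Ryabko:103s} does not close this, because there the estimator is applied to the whole sequence containing one change point, while here it is applied to windows $X_{L(k)-n\alpha..R(k)+n\alpha}$ cut out of a longer heterogeneous sequence, which is precisely the extension Lemma~\ref{lem2}(ii) supplies. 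A secondary, fixable imprecision: your claim that $\gap_j>\mingap$ forces two change points within $n\gap_j$ of each other fails when $\mingap$ is attained by the first or last segment ($\theta_1-\theta_0$ or $\theta_{\kappa+1}-\theta_\kappa$); the argument should be phrased as a dichotomy (either some block of three consecutive segments contains two change points, forcing $\gamma\to 0$ on some shift, or no block does and the iteration is in fact good).
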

The following two operators, 
namely, the 
score function  $\Delta_{\x}$
and the single-change point-estimator $\Phi_{\x}$ 
are used in our method.
\begin{defn}
Let $\x=X_{1..n}$ be a sequence 
and consider a subsequence $X_{a..b}$ of $\x$ with $a < b \in 1..n$.  
\begin{enumerate}
\item[i.~] Define the score function or the intra-subsequence distance of $X_{a..b}$ as 
\begin{equation}\label{defn:Delta}
\Delta_{\x}(a,b):= \hat{d} \left(X_{a.. \lfloor \frac{a+b}{2}\rfloor},X_{\lceil \frac{a+b}{2}\rceil..b} \right)
\end{equation} 
\item[ii.~] Define the single-change point estimator of $X_{a..b}$ as
\begin{equation}\label{defn:Phi}
\Phi_{\x}(a,b,\alpha):=~\argmaxdisp{t \in a..b}~\hat{d}\left(X_{a-n\alpha..t},X_{t..b+n\alpha}\right), 
~\text{where $\alpha \in (0,1)$}
\end{equation} 
\end{enumerate}
\end{defn}
Let us start by giving an overview of what 
Algorithm~\ref{alg:kk} aims to do. The algorithm attempts to simultaneously estimate all $\kappa$ 
change points using the single-change point-estimator  $\Phi_\x$
given by \eqref{defn:Delta} applied to appropriate segments of the sequence. 
In order for $\Phi_\x$ to produce asymptotically consistent estimates in this setting, 
each change point must be isolated within 
a segment of $\x$ whose length is a linear function of $n$. 
Moreover, each segment containing a change point must be 
``sufficiently far'' from the rest of the change points, 
where ``sufficiently far'' means within a distance linear in $n$.
This may be achieved by 
dividing $\x$ into consecutive non-overlapping segments, each of 
length $n\alpha$ with $\alpha:=\gap/3$ for some $\gap \in (0,\mingap]$,
where $\mingap$ is given by \eqref{defn:thetamin}. 
Since, by definition, $\mingap$ specifies the minimum separation of the 
change point parameters, the resulting partition has the property that 
every three consecutive segments of the partition contain  {\em at most one}
change point. 
However, $\mingap$ is not known to the algorithm.
Moreover, even if $\gap \leq \mingap$,
not all segments in the partition contain a change point. 
The algorithm uses the score function $\Delta_\x$  given by (\ref{defn:Delta}) 
to identify the segments that contain change points. 
As for $\mingap$, instead  of trying to find it, the algorithm
produces many partitions of $\x$ (using different guesses of $\mingap$), and produces a set of candidate change point estimates using
each guess.
Finally, a weighted combination of the candidate estimates is produced.
The weights are designed to converge to zero on iterations where   
the algorithm's guess of a lower bound on $\mingap$ is incorrect. 
\begin{algorithm}[!t]
\caption{A multiple change point estimator}\label{alg:kk}
\begin{algorithmic}[1]
\State \textbf{input:} $\x = X_{1..n}$, Number $\kappa$ of Change points
\State \textbf{initialize:} $\eta \gets 0$
\For {$j=1..\log n$}
\State{$\gap_j \gets 2^{-j},~\alpha_j \gets \gap_j/3,~w_j \gets 2^{-j}$ } \Comment{Set the step size and iteration weight}
\For{$t=1..\kappa+1$}
\State {$b^{t,j}_i \gets n\alpha_j(i+\frac{1}{t+1}),~i=0..\lfloor \frac{1}{\alpha_j}-\frac{1}{t+1}\rfloor$} \label{alg:bounds}\Comment{Generate boundaries}
 \For{$l=0..2$}\label{alg:gamma0}
\State{$d_{i'} \gets \Delta_{\x}(b_{l+3(i'-1)}^{t,j},b_{l+3i'}^{t,j}),~i'=1..\frac{1}{3}(\lfloor \frac{1}{\alpha_j}-\frac{1}{t+1}\rfloor-l)$}\label{alg:di}
\State {$\gamma_l \gets d_{[\kappa]}$} \Comment{Store the $\kappa^{\text{th}}$ highest value}\label{alg:gamma_l}
\EndFor
\State {$\gamma(t,j)\gets \displaystyle \min_{l=0..2} \gamma_l$} \label{alg:gamma}
\Comment{Obtain the grid's performance score}\label{gamma:kk}
\State{$\{\mu_1,\dots,\mu_{\kappa}\} \gets \argmaxdispt{i \in 1..\lfloor \frac{1}{\alpha_j}-\frac{1}{t+1}\rfloor-1}{k=1..\kappa}\Delta_x(b_{i}^{t,j},b_{i+1}^{t,j})$}\Comment{\parbox[t]{.33\linewidth}{Find $\kappa$ segments of highest $\Delta_\x$; ($X_{b_{\mu_k}^{t,j}..b_{\mu_k+1}^{t,j}}$ is the segment with $k^\text{th}$ highest score).}}\label{alg:mu1}
\State{$(b_{[1]}^{t,j},\dots,b_{[\kappa]}^{t,j}) \gets \sort{(b_{\mu_1}^{t,j},\dots,b_{\mu_{\kappa}}^{t,j}})$}
\Comment{\parbox[t]{.4\linewidth}{Sort the selected boundaries in increasing order}}\label{alg:mu2}
\State{$\hat{\pi}_k^{t,j}:=\Phi_{\x}(b_{[k]}^{t,j},b_{[k]+1}^{t,j},\alpha_j),~k=1..\kappa$} 
\Comment{\parbox[t]{.4\linewidth}{Seek a change point 
in $\kappa$ segments of highest $\Delta_\x$ 
}}
\State{$\eta \gets \eta+w_j\gamma(t,j)$}\Comment{Update the sum of weights}
\EndFor
\EndFor 
\State{ $\hat{\theta}_k \gets \frac{1}{n\eta} \sum_{j=1}^{\log n}\sum_{t=1}^{\kappa+1}  w_j \gamma(t,j) \hat{\pi}_k^{t,j}
,~k=1..\kappa$} \Comment{Calculate the final estimates}\label{alg:estim}
\State \textbf{return:} $\hat{\theta}_1,\dots,\hat{\theta}_{\kappa}$
\end{algorithmic}
\end{algorithm}

More precisely, Algorithm~\ref{alg:kk} works as follows. 
Given $\x \in \X^n$, it
iterates over $j=1..\log n$, and at each iteration
it produces a guess $\gap_j$ as a lower-bound on $\mingap$. 
For every fixed $j$, a total of $\kappa+1$ grids are generated, 
each composed of evenly-spaced boundaries $b_i^{t,j},~i=0..\lfloor\frac{1}{\alpha_j}-\frac{1}{t+1}\rfloor$,
that are $n\alpha_j$ apart for $\alpha_j:=\gap_j/3,~\gap_j:=2^{-j}$. 
This is specified in Line~\ref{alg:bounds} 
of Algorithm~\ref{alg:kk}.
The grids have distinct starting positions $\frac{n\alpha_j }{t+1}$ for $t=1..\kappa+1$. 
As shown in the proof 
 of Theorem~\ref{thm:kk}, 
this ensures that for a fixed $j$ at least one of the grids for some $t \in 1..\kappa+1$ 
has the property that 
the change points do not lie at the boundaries. 
This idea is depicted in Figure~\ref{fig1}.
\begin{figure}
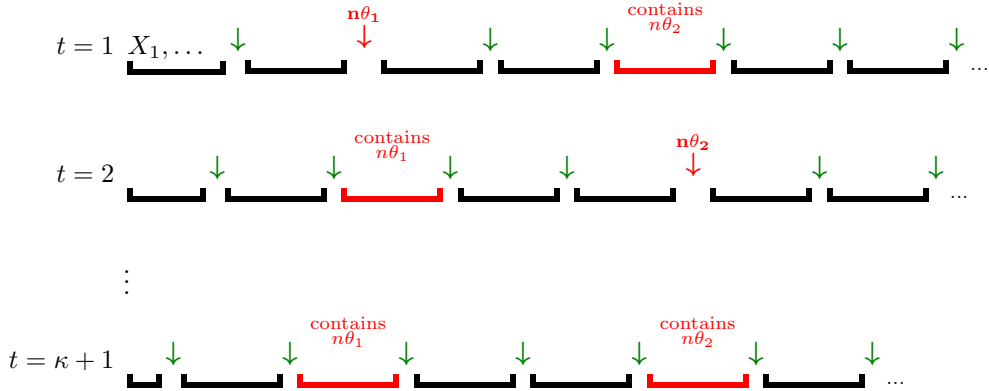

\begin{align*}
t=1~&\underbracket[2pt]{X_1,\dots~~}\overset{\darkgreen{\displaystyle \boldsymbol \downarrow}}{~} 
{\underbracket[2pt]{\textcolor{white}{\leftarrow X_1 \rightarrow}}}\overset{\red{\underset{\displaystyle \boldsymbol \downarrow}{n\theta_1}}}{}
{\underbracket[2pt]{\textcolor{white}{\leftarrow X_1 \rightarrow}}}\overset{\darkgreen{\displaystyle \boldsymbol \downarrow}}{}
{\underbracket[2pt]{\textcolor{white}{\leftarrow X_1 \rightarrow}}}\overset{\darkgreen{\displaystyle \boldsymbol \downarrow}}{}
\red{{\underbracket[2pt]{\overset{\substack{\text{contains}\\n\theta_2 }}{\textcolor{white}{\leftarrow X_1 \rightarrow}}}}}
\overset{\darkgreen{\displaystyle \boldsymbol \downarrow}}{}
{\underbracket[2pt]{\textcolor{white}{\leftarrow X_1 \rightarrow}}}
\overset{\darkgreen{\displaystyle \boldsymbol \downarrow}}{}
{\underbracket[2pt]{\textcolor{white}{\leftarrow X_1 \rightarrow}}}
\overset{\darkgreen{\displaystyle \boldsymbol \downarrow}}{~}
\underset{\dots}{\textcolor{white}{\dots}}
\\
\\
t=2~&\underbracket[2pt]
{\textcolor{white}{X_1}~~~~~}\overset{\darkgreen{\displaystyle \boldsymbol \downarrow}}{~} 
{\underbracket[2pt]{\textcolor{white}{\leftarrow X_1 \rightarrow}}}
\overset{\darkgreen{\displaystyle \boldsymbol \downarrow}}{~}
\red{{\underbracket[2pt]{\overset{\substack{\text{contains}\\n\theta_1 }}{\textcolor{white}{\leftarrow X_1 \rightarrow}}}}}
\overset{\darkgreen{\displaystyle \boldsymbol \downarrow}}{~}
{\underbracket[2pt]{\textcolor{white}{\leftarrow X_1 \rightarrow}}}
\overset{\darkgreen{\displaystyle \boldsymbol \downarrow}}{~}
{\underbracket[2pt]{\textcolor{white}{\leftarrow X_1 \rightarrow}}}
\overset{\red{\underset{\displaystyle \boldsymbol \downarrow}{n\theta_2}}}{~}
{\underbracket[2pt]{\textcolor{white}{\leftarrow X_1 \rightarrow}}}
\overset{\darkgreen{\displaystyle \boldsymbol \downarrow}}{~}
{\underbracket[2pt]{\textcolor{white}{\leftarrow X_1 \rightarrow}}}
\overset{\darkgreen{\displaystyle \boldsymbol \downarrow}}{~}
\underset{\dots}{\textcolor{white}{\dots}}
\\
\\
&\vdots\\
t=\kappa+1~&\underbracket[2pt]{\textcolor{white}{X_1}}
\overset{\darkgreen{\displaystyle \boldsymbol \downarrow}}{~} 
{\underbracket[2pt]{\textcolor{white}{\leftarrow X_1 \rightarrow}}}
\overset{\darkgreen{\displaystyle \boldsymbol \downarrow}}{~}
\red{{\underbracket[2pt]{\overset{\substack{\text{contains}\\n\theta_1 }}{\textcolor{white}{\leftarrow X_1 \rightarrow}}}}}
\overset{\darkgreen{\displaystyle \boldsymbol \downarrow}}{~}
{\underbracket[2pt]{\textcolor{white}{\leftarrow X_1 \rightarrow}}}
\overset{\darkgreen{\displaystyle \boldsymbol \downarrow}}{~}
{\underbracket[2pt]{\textcolor{white}{\leftarrow X_1 \rightarrow}}}
\overset{\darkgreen{\displaystyle \boldsymbol \downarrow}}{~}
\red{{\underbracket[2pt]{\overset{\substack{\text{contains}\\n\theta_2 }}{\textcolor{white}{\leftarrow X_1 \rightarrow}}}}}
\overset{\darkgreen{\displaystyle \boldsymbol \downarrow}}{~}
{\underbracket[2pt]{\textcolor{white}{\leftarrow X_1 \rightarrow}}}
\overset{\darkgreen{\displaystyle \boldsymbol \downarrow}}{~}
\underset{\dots}{\textcolor{white}{\dots}}
\end{align*}
\caption{For a fixed $j$, Algorithm~\ref{alg:kk} generates $\kappa+1$ 
grids composed of segments of length  
$n\alpha_j$ but with distinct starting points:
$n\alpha_j/(t+1),~t=1..\kappa+1$, where $\alpha_j$ is the algorithm's guess 
of $\mingap/3$. At the iteration shown in this figure, $\alpha_j\leq \mingap/3$
so that every three consecutive segments contain at most one change point. 
Since there are $\kappa$ change points, there exists at least one grid 
(in this example the one corresponding to $t=\kappa+1$) with the property 
that none of the change points are located at the boundaries. 
}
\label{fig1}
\end{figure}
Among the segments of the grid, 
$\kappa$ segments, $X_{b_{[k]}^{t,j}..b_{[k]+1}^{t,j}},~k=1..\kappa$, of highest score $\Delta_\x$ are selected; 
this is outlined in Lines~\ref{alg:mu1}~and~\ref{alg:mu2} of the algorithm.   
The single-change point estimator
$\Phi_\x$ is used to seek a candidate change 
point parameter in each of the selected segments. 
The weighted combination is given as the final estimate 
for every change point parameter $\theta_k,~k=1..\kappa$.
Two sets of weights are used,
namely, an iteration weight $w_j:=2^{-j}$ and a score 
$\gamma(t,j)$.
The former gives lower precedence to finer grids. 
To calculate the latter, at each iteration on $j$ and $t$,
for every fixed $l \in 0..2$, a partition of the grid is considered, 
composed of non-overlapping consecutive segments $X_{b_{l+3(i'-1)}^{t,j}..b_{l+3i'}^{t,j}},~i' = 1..
\frac{1}{3}(\lfloor\frac{1}{\alpha_j}-\frac{1}{t+1}\rfloor-l)$ of length $n\gap_j$. For each partition, a parameter $\gamma_l$ is calculated as 
the $\kappa^{\text{th}}$ highest intra-distance value $\Delta_x$ 
of its segments;
the performance weight $\gamma(t,j)$ is obtained as $\min_{l=0..2}\gamma_l$; 
this procedure is outlined in Lines~\ref{alg:gamma0}-\ref{gamma:kk} of the algorithm. 
(As shown in the proof,
 $\gamma(t,j)$  converges to zero 
on iterations where either $\gap_j>\mingap$ 
or there exists some change point on the boundary of one of the segments.)
\\
\textbf{Computational Complexity.~}
The proposed method can be easily and efficiently implemented. 
For a fixed $j$, a total of $1/\alpha_j$ 
distance calculations are done on segments of length $3\alpha_j$, 
and a total of $\kappa \alpha_j n$ distance calculations are done to 
estimate each change point; the procedure is repeated $\kappa+1$ times.  
By Remark~1, and summing over $j \in 1..\log n$ iterations,  
the overall complexity of these calculations is bounded by  $\O(\kappa^2n^2~\text{polylog}~n)$. 
The rest of the computations are of negligible order.
\section{Generalisation: AMS processes and gradual changes}\label{sec:extensions}
In this section we argue that our results can be strengthened to a more general case, where
the process distributions that generate the data 
are Asymptotically Mean Stationary (AMS) ergodic. 
We use this observation in turn to address
the problem of estimating gradual as opposed to abrupt changes in the distribution of the data. 

Recall that a process $\rho$ is {\em stationary}
if for any $i,j\in 1..n$ and $B \in B^m,~m \in \N$, 
we have $\rho(X_{1..j} \in B)=\rho(X_{i..i+j-1} \in B).$
A process $\rho$ is called {\em AMS} if for any $j \in 1..n$ and $B \in B^m,m~\N$ 
the series $\lim_{n\rightarrow \infty}\sum_{i=1}^{n}{1\over n}\rho(X_{i..i+j-1}\in B)$ converges.  In this case the limit,  
which we denote $\bar\rho(B)$, forms a measure   $\bar\rho(X_{1..j} \in B):=\bar\rho(B)$, $B \in B^m,~m \in \N$, 
 called {\em asymptotic mean} of $\rho$. Furthermore, for AMS processes for every  $B \in B^m,~m \in \N$, the frequency 
  $\nu(X_{1..n}, B)$ converges $\rho$-a.s.\ to a random variable 
  with mean $\bar\rho(B)$. Finally, as in the case of stationary processes, 
if the latter random variable is a.s.\  constant, then $\rho$ is called AMS ergodic. 
The reader is referred to \cite{Gray:88} for more information on AMS processes.

It is easy to check that 
our results readily hold for the case where 
the unknown process distributions that generate the data are AMS ergodic, 
and their asymptotic means before and after the change are different. 
Indeed, the  only  property that is used in the proofs is the convergence of all frequencies.
The class of all processes with AMS properties  is precisely the class of all processes for which this convergence holds.

This generalisation allows us to take into consideration gradual rather than abrupt changes in distribution.
So far we have considered a formulation in which the distribution is the same throughout a segment
and is different between the segments. This kind of change is referred to as {\em abrupt}.
Another formulation of the problem also considered in the literature (see e.g., \cite{brodsky:93}) is when 
the process distributions change gradually. 
More formally, we are given a sequence  $\x \in \X^n,~n \in \N$ such that 
$$
\x:=X_{1..\lfloor n\theta_1^{(1)}\rfloor},X_{\lfloor n\theta_1^{(1)}\rfloor+1..\lfloor n\theta_1^{(2)}\rfloor},
X_{\lfloor n\theta_1^{(2)}\rfloor+1..\lfloor n\theta_2^{(1)}\rfloor}, \dots,
X_{\lfloor n\theta_{\kappa}^{(2)}\rfloor+1..n}
$$
has $\kappa$ change points at $\lfloor n\theta_k^{(1)}\rfloor,~k \in \kappa$. 
The segments 
$X_{\lfloor n\theta_{k-1}^{(2)}\rfloor+1..\lfloor n\theta_{k}^{(2)}\rfloor},~k=1..\kappa$,
where $\theta_0^{(2)}:=0$ and $\theta_{\kappa+1}^{(1)}:=1$,
are generated by unknown, 
stationary ergodic process distributions. 
Moreover, their lengths are linear in $n$ so that
$\mingap:= \theta_k^{(2)}-\theta_{k-1}^{(2)} >0 $. 
The notion of gradual change is formalised by considering
between every pair of consecutive segments generated by different process distributions
 some arbitrary sequence of $o(n)$ length, i.e.
for all $k \in 1..\kappa$ we have 
$
\theta_k^{(2)}-\theta_{k}^{(1)} = o(n),
$ and $X_{\lfloor n\theta_1^{(1)}\rfloor+1..\lfloor n\theta_1^{(2)}\rfloor}$ is arbitrary (for example, deterministic).
Observe that under this formulation the process distributions generating the segments 
$X_{\lfloor n\theta_{k-1}^{(1)}\rfloor+1..\lfloor n\theta_{k}^{(1)}\rfloor},~k=1..\kappa$, where $\theta_0^{(1)}:=0$,
are AMS ergodic. Thus, by the above argument, the results of Theorem~\ref{thm:kk} 
carry over to this scenario as well, ensuring the asymptotic consistency of   
Algorithm~\ref{alg:kk}
in this formulation.  
Even more generally, under the AMS ergodic assumption, such $o(n)$ segments of arbitrary data  could be located anywhere within the segments;
however, the asymptotic consistency results would still remain unaffected.
\section{Experimental evaluations}\label{sec:exp}
In this section we evaluate our
method using synthetically generated data. 
In order to generate the data we use stationary ergodic process distributions that do not belong to any ``simpler" general class of time-series, and 
cannot be approximated by finite-state models. 
Moreover,  the single-dimensional
marginals of all distributions are the same throughout the generated sequence. 

We generate a segment $\y:=Y_1,\dots,Y_m \in \R^m,~m\in \N$ as follows. 
\textbf{1.}~Fix a parameter $\alpha \in (0,1)$ and two Gaussian distributions $\U_1$ and $\U_2$. 
\textbf{2.}~Let $r_0$ be drawn randomly from $[0,1]$.
\textbf{3.}~For each $i=1..m$ obtain 
$
r_i:=r_{i-1}+\alpha \mod 1
$
; draw $y^{(j)}_i$ from $\mathcal N_j,~j=1,2$.  
\textbf{4.}~Set 
$Y_i:=\mathbb{I}\{r_i\leq 0.5\}y_i^{(1)}+\mathbb{I}\{r_i> 0.5\}y_i^{(2)}.$
If $\alpha$ is irrational
this produces a real-valued stationary ergodic time-series. 
We simulate $\alpha$ by a long double with a long mantissa. 
Note that deterministically setting $y_i^{(1)}=0 $ and $y_i^{(2)}=1,~i\in 1..m$ 
results in a binary sequence $\x \in \{0,1\}^m$. 
Similar families  are  commonly used as examples in this framework, see, for example, \cite{Sheilds:96}.  

For the purpose of our experiments with the 
convergence of error-rate as a function of sequence-length $n$, 
we considered three values of  $\kappa$: $4,5~\text{and}~6$. 
In each case, we fixed $\kappa+1$ parameters 
$\alpha_1:=0.2..,~\alpha_2:=0.4..$, $\alpha_3:=0.6.., \dots$ 
(with long mantissae)\footnote{\begin{mytable}
{\tiny
\begin{tabular}{llll} 
%\hline
$\alpha_1=0.22573625315372165312763512$ & $\alpha_2=0.465456356354654376453$&
$\alpha_3=0.678638276327863278362736283628736$&\\$\alpha_4=0.887438463874637846343$&
$\alpha_5=0.07283729372372987323232323$&$\alpha_6=0.4272638726382736328791217312893$&$\alpha_7=\alpha_1$\\
%\hline
\end{tabular}
}\end{mytable}}
to correspond to different process distributions and 
used two Gaussian distributions $\mathcal N_1$ and $\mathcal N_2$ with means $0$ and $1$ respectively,
and standard deviation $1$.  
To produce $\x \in \R^n$ in each case we used the first
$\kappa$ change point parameters from the following sequence of $6$ values
$\theta_1=0.18$,  $\theta_2 =0.29$, $\theta_3=0.51$, $\theta_4 = 0.62$, $\theta_5=0.80$ and $\theta_6= 0.91$, 
and respectively set $\theta_0=0$ and $\theta_{\kappa+1}=1$. 
Notice that for $\kappa=6$ the minimum separation $\mingap$ between 
the change points is $0.09$ and for $\kappa=4,5$ it is $0.1$.   
Every segment of length 
$n_k:=\lfloor n(\theta_k-\theta_{k-1})\rfloor,~k=1..\kappa+1$ with $\theta_0:=0,~\theta_{\kappa+1}:=1$ 
was generated with $\alpha_{k},~k=0..\kappa+1$, and using $\U_1$ and $\U_2$. 
Figure~\ref{fig:synth} demonstrates the average estimation error of 
Algorithm~\ref{alg:kk}  
as a function of sequence length $n$. We calculate the estimation error as $\sum_{k=1}^{\kappa}|\hat{\theta}_{k}-\theta_k|.$
{\small \begin{figure}[!h]
\centering{
\includegraphics[scale=0.63]{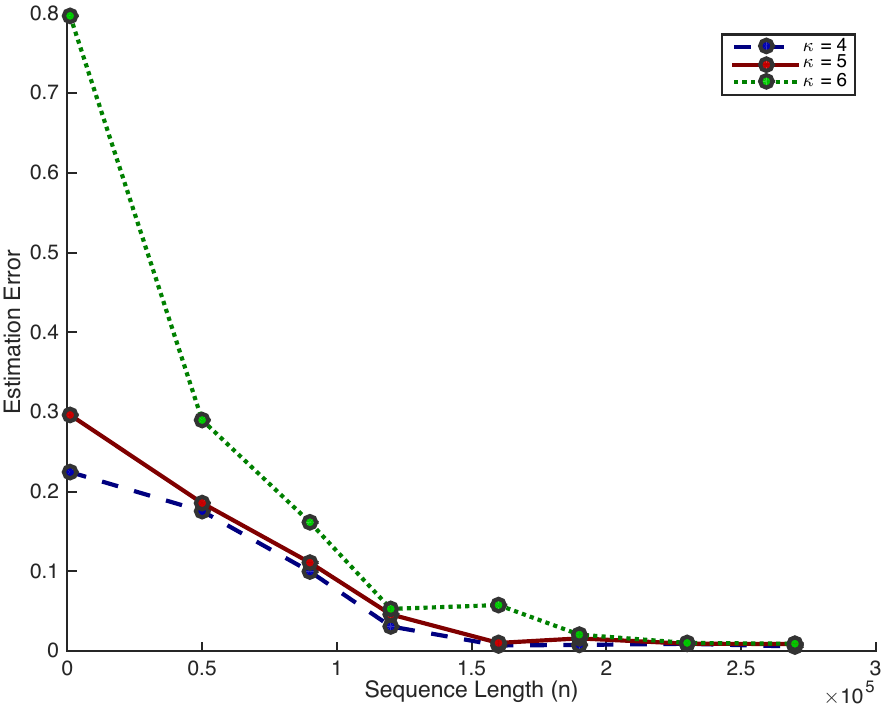}
}
\caption{Average (over 20 iterations) error of 
Alg\ref{alg:kk}$(\x,\kappa),~\x \in \R^n$, as a function of $n$ for 
$\kappa=4,5,6$.}
\label{fig:synth}
\end{figure}} 
As can be seen in the graph, while the algorithm converges in all three cases, the estimation error is on average 
slightly higher for larger $\kappa$. 
{\small \begin{figure}[!h]
\centering{
\includegraphics[scale=0.6]{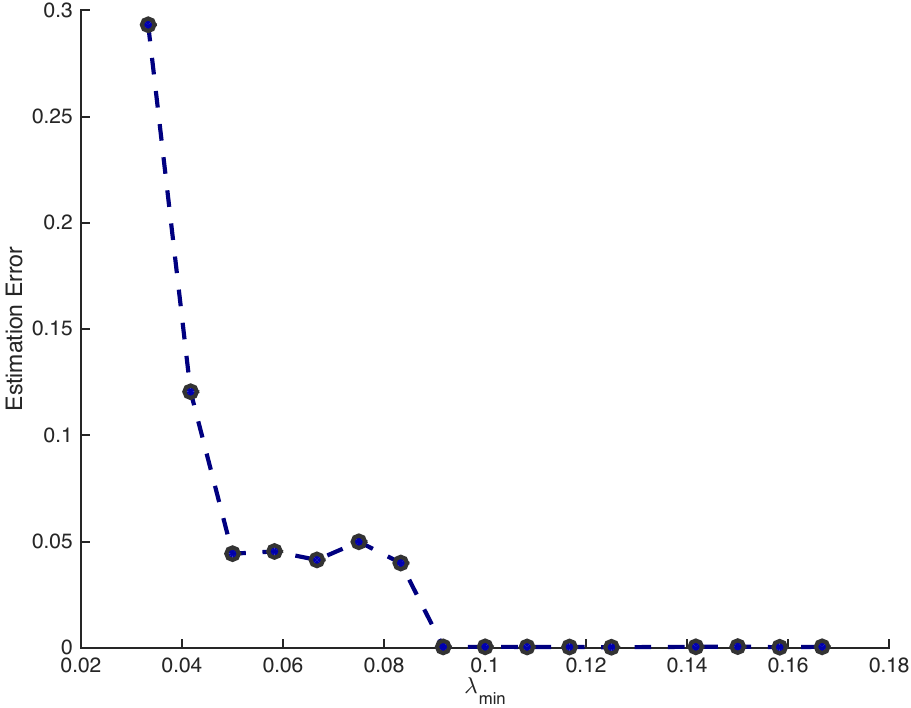}}
\caption{Average (over 10 iterations) error of 
Alg\ref{alg:kk}$(\x,\kappa),~\x \in \{0,1\}^n$ as a function of $\mingap$ for $n=3.0 \times 10^4$ and $\kappa=4$.}
\label{fig:lambda}
\end{figure}}

In order to examine the dependence of the algorithm on the minimum separation $\mingap$ 
between the change points, we fixed the sequence length $n$, 
and varied $\mingap$ to observe the average error change as a function $\mingap$. 
More specifically, we generated sequences $\x \in \{0,1\}^n,~n=3.0 \times 10^4$ with $\kappa = 4$ change-points
 as the concatenation of $5$  segments of lengths $5000,~n_0,12000-n_0,~7000,~6000$ respectively, where $n_0 = 1000, 1250, 1500, \dots, 5000$; note that $\mingap=n_0/n$ in this case.
To generate the segments we proceeded as in the previous experiment but with binary-valued processes (letting $y_i^{(1)}=0 $ and $y_i^{(2)}=1,~i\in 1..n$), and used
$\alpha_1:=0.12..,~\alpha_2:=0.14..$, $\alpha_3:=0.16.., \dots$
(with long mantissae)\footnote{\begin{mytable}
{\tiny
\begin{tabular}{llll} 
%\hline
$\alpha_1=0.122573625315372165312763512$ & $\alpha_2=0.1465456356354654376453$&
$\alpha_3=0.1678638276327863278362736283628736$\\$\alpha_4=0.1887438463874637846343$&
$\alpha_5=0.107283729372372987323232323$&&\\
%\hline
\end{tabular}
}
\end{mytable}} as parameters for the consecutive distributions. 
As can be seen in Figure~\ref{fig:lambda}, for a fixed $n$, the estimation error decreases as a function of $\mingap$.

\section{Concluding Remarks}\label{sec:conc}
We have presented an asymptotically consistent method to locate the changes
in highly dependent time-series data.
As explained in the introduction, in  the considered setting, rates of convergence (even of frequencies
to respective probabilities) are provably impossible to obtain, which 
is why the proposed algorithm comes only with asymptotic guarantees.
 At the same time, it may be interesting
to analyse how fast its error converges to zero  under stronger
assumptions, such as  i.i.d.\  or  mixing conditions. 
More generally, it would be interesting to discover whether 
asymptotic guarantees in the considered settings can be combined 
with optimality (up to constant factors) under stronger assumptions. 
This is left for future work.

\section{Proof of Theorem~\ref{thm:kk}}\label{sec:proofs}
In this section we prove the main consistency result. 
To facilitate the exposition, we  start with a sketch of the proof. 
\begin{myproof}[Proof Sketch]
To see why Algorithm~\ref{alg:kk} works, first observe that 
the empirical estimate $\hat{d}(\cdot,\cdot)$ of the distributional distance is 
consistent. Thus, the empirical distributional distance 
between a given pair of sequences converges to the 
distributional distance between their generating processes. 
From this we can show that 
the intra-subsequence distance $\Delta_\x$ 
corresponding to the segments in the grid that do not contain a change point  converges to zero. 
This is established in Lemma~\ref{prelem:nochpt:i}.(\ref{prelem:nochpt:ii}) below. 
On the other hand, since the generated grid becomes finer as a function of $j$, 
from some $j$ on, we have $\alpha_j <\mingap/3$ so that every three consecutive segments of the grid
contain \textit{at most} one change point. In this case, 
for  every segment that contains a change point, 
the single-change-point estimator $\Phi_\x$ produces an estimate 
that, for long enough segments, becomes arbitrarily close to the true change point. 
This is shown in Lemma~\ref{lem2}.\eqref{lem2:ii} below. 
Moreover, as follows from Lemma~\ref{lem2}.\eqref{lem2:i}, 
for large enough $n$ the performance 
scores associated with these segments are bounded below by some non-zero constant. 
Thus, the $\kappa$ segments of highest $\Delta_\x$ 
each contain a change point which can be  estimated consistently using $\Phi_\x$. 
However, the estimates produced at a given iteration for which $\alpha_j> \mingap/3$ 
may be arbitrarily bad.  
Moreover, recall that even for $\alpha_j\leq \mingap/3$, an appropriate grid to provide 
consistent estimates must have the property that 
no change point lies exactly on a grid boundary.  
However, it is not possible to directly identify 
such appropriate grids.
The following observation is key to their indirect identification. 

Consider the partitioning of $\x$ 
into $\kappa$ consecutive segments 
where   there exists at least one segment with more than one change point. 
Since there are exactly $\kappa$ change points, 
there must exist at least one segment in this partitioning that does not contain any change points. 
As follows from Lemma~\ref{prelem:nochpt:i}.(\ref{prelem:nochpt:ii}), 
the segment that contains no change points has an intra-subsequence 
distance $\Delta_{\x}$ that converges to~$0$. 
On the iterations for which $\alpha_j>\mingap/3$,  
at least one of the three partitions has the property that 
among every set of $\kappa$ segments in the partition, 
there is \textit{at least} one segment that  
contains no change points. In this case, $\Delta_{\x}$ 
corresponding to the segment without a change point converges to~$0$. 
The same argument holds for the case where    $\alpha_j \leq \mingap$, while at the same time a 
change point happens to be located
exactly at the boundary of a segment in the grid. 
Observe that for a fixed $j$, the algorithm forms a total of $\kappa+1$ different grids, with the same 
segment size, but distinct starting points $\frac{n\alpha_j }{t+1}~t=1..\kappa+1$. 
Since there are $\kappa$ change points, 
for all $j$ such that $\alpha_j \leq \mingap/3$ there exists at least one 
appropriate grid (for some $\tau \in 1..\kappa+1$), that simultaneously contains all the change points within its segments.   
In this case, $\gamma(\tau,j)$ converges to a non-zero constant. 
The final estimate $\hat{\theta}_k$ for each change point parameter $\theta_k$ is
obtained as a weighted sum of the candidate estimates produced at each iteration. 
Two sets of weights are used in this step, namely $\gamma(t,j)$ and $w_j$, 
whose roles can be described as follows. 
\begin{enumerate}
\item $\gamma(t,j)$ is used to penalise for the (arbitrary) results produced 
on iterations on $j \in 1..\log n$ and $t \in 1..\kappa+1$, where  
either $\alpha_j >\mingap/3$, or, while we have $\alpha_j \leq \mingap/3$,
there exists some $\theta_k$ for some $k \in 1..\kappa$ such that 
$\lfloor n\theta_k \rfloor \in \{b_i^{t,j}: i=0..\lfloor \frac{1}{\alpha_j}-\frac{1}{t+1}\rfloor\}$. 
As follows from the argument above, $\gamma(t,j)$ converges to zero only on these iterations, 
while it is bounded below by a non-zero constant on the rest. 
\item $w_j$ is used to give precedence to estimates sought 
in longer segments. Since the grids are finer for larger $j$, 
at some higher iterations the segments may not be long enough 
to produce correct estimates. 
\end{enumerate}
Therefore, if $n$ is large enough the final estimates $\hat{\theta}_k,~k=1..\kappa$ 
produced by Algorithm~\ref{alg:kk} converge to the true change point  parameters, $\theta_k,~k=1..\kappa$.
\end{myproof}
We now present the proof Theorem~\ref{thm:kk} which in turn depends upon some technical lemmas stated and proved below. 
\begin{lem}\label{prelem:nochpt:i}
Let $\x = X_{1..n}$ be generated by a stationary 
ergodic process $\rho$. 
For all $\alpha \in (0,1)$ 
the following statements hold with $\rho$-probability 1:  
\begin{enumerate}
\renewcommand{\theenumi}{\roman{enumi}}
\renewcommand{\labelenumi}{(\theenumi)}
\item \label{prelem:nochpt:i1}  
$\lim_{n \rightarrow \infty} \displaystyle \sup_{\substack{b_1,b_2 \in 1..n\\b_2-b_1 \geq \alpha n}}  
\sum_{\substack{B \in B^{m,l}\\m,l \in 1..T}} \dig{\rho}{b_1}{b_2}=0$
for every $T \in \N$.
\item \label{prelem:nochpt:i1d}
$\displaystyle \lim_{n \rightarrow \infty} \sup_{\substack{b_1,b_2 \in 1..n\\b_2-b_1 \geq \alpha n}}  
\hat{d}\left(X_{b_1..b_2},\rho \right)=0$.
\item\label{prelem:nochpt:ii}
$\displaystyle \lim_{n \rightarrow \infty} \sup_{\substack{b_2-b_1\geq \alpha n}} 
\Delta_{\x}\left(b_1,b_2\right) = 0  $.
\end{enumerate}
\end{lem}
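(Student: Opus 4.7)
The plan is to prove the three claims in order, with (i) doing the real work and (ii), (iii) following by short reductions.

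For (i), I would fix $T\in\N$ and reduce to the case of a single pair $(m,l)$ with $m,l\le T$, since the displayed sum has only finitely many such pairs. The core identity I want to exploit is
\[
(b_2-b_1-m+2)\,\nu(X_{b_1..b_2},B)
 \;=\; S_{b_2}(B) - S_{b_1-1}(B),
\qquad
S_n(B):=\sum_{i=1}^{n-m+1}\mathbb I\{X_{i..i+m-1}\in B\},
\]
from which a short algebraic manipulation gives
\[
(b_2-b_1-m+2)\bigl(\nu(X_{b_1..b_2},B)-\rho(B)\bigr)
 = \bigl[S_{b_2}(B)-b_2\rho(B)\bigr]-\bigl[S_{b_1-1}(B)-(b_1-1)\rho(B)\bigr]+(m-1)\rho(B).
\]
The ergodic theorem gives, for each fixed $B$, $\nu(X_{1..n},B)\to \rho(B)$ $\rho$-a.s., and by a standard Scheff\'e-type argument (using that $\nu(X_{1..n},\cdot)$ and $\rho$ are probability measures on the countable partition $B^{m,l}$) this strengthens to $\sum_{B\in B^{m,l}}|\nu(X_{1..n},B)-\rho(B)|\to 0$ $\rho$-a.s. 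Hence for every $\epsilon>0$ there is $N$ such that for $n\ge N$, $\sum_B|S_n(B)-n\rho(B)|\le n\epsilon+O(m)$. Plugging into the displayed identity, summing over $B$, and dividing by $b_2-b_1-m+2\ge \alpha n/2$ (for $n$ large) yields a uniform bound of order $\epsilon/\alpha + o(1)$ provided $b_1\ge N$. The case $b_1<N$ is handled trivially because $S_{b_1-1}(B)$ and $\sum_B\rho(B)\cdot(b_1-1)$ are both $O(N)$, which is negligible compared to the denominator $\alpha n$. Taking a union bound over the finitely many $(m,l)\in\{1,\dots,T\}^2$ and the finitely many $B$-tails (via the Scheff\'e truncation) delivers (i).

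Claim (ii) then follows by splitting the double sum defining $\hat d(\cdot,\rho)$ at a threshold $T$: the head $m,l\le T$ tends to $0$ uniformly by (i), while the tail is bounded by $2\sum_{(m,l)\notin[1,T]^2}w_m w_l$, which is uniformly arbitrarily small because $\sum_{m,l}w_mw_l<\infty$. Letting $T\to\infty$ after $n\to\infty$ yields (ii). Claim (iii) is then essentially one line: for $|b_2-b_1|\ge \alpha n$, each half of the midpoint split has length at least $\alpha n/2$, is generated by the same stationary ergodic $\rho$, and
\[
\Delta_\x(b_1,b_2)=\hat d\bigl(X_{b_1..\lfloor(b_1+b_2)/2\rfloor},X_{\lceil(b_1+b_2)/2\rceil..b_2}\bigr)
\le \hat d(X_{b_1..\cdot},\rho)+\hat d(X_{\cdot..b_2},\rho)
\]
by the triangle inequality from the Remark. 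Both terms on the right are $\sup$-controlled by (ii) applied with $\alpha$ replaced by $\alpha/2$, so the right-hand side tends to $0$ uniformly.

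The main obstacle is the uniformity in (i): getting from the pointwise/$L^1$ ergodic convergence of $\nu(X_{1..n},\cdot)$ to $\rho$ to a statement that is simultaneously uniform over all starting/ending indices $b_1,b_2$ with $b_2-b_1\ge\alpha n$. The key device is the $S_n(B)$-identity above, which converts the two-sided frequencies into differences of one-sided partial sums so that the single $L^1$ convergence of $\nu(X_{1..n},\cdot)$ to $\rho$ can be used at both endpoints; the factor $\alpha n$ in the denominator is what absorbs the errors. Once this uniformity is established, (ii) and (iii) are routine.
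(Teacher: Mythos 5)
Your proposal is correct and follows essentially the same route as the paper: part (i) is proved by rewriting the frequency over $X_{b_1..b_2}$ as a difference of prefix counts so that the single a.s.\ convergence $\nu(X_{1..n},\cdot)\to\rho$ can be applied at both endpoints (the paper's inequality \eqref{eq:numan1}), truncating to finitely many cells, and letting the $\alpha n$ denominator absorb the $O(m)$ boundary terms and the small-$b_1$ case; parts (ii) and (iii) are the same tail-splitting and triangle-inequality reductions the paper uses. The only cosmetic difference is that you invoke a Scheff\'e argument to get $\ell^1$ convergence over the whole partition $B^{m,l}$ at once, where the paper instead fixes a finite high-probability subfamily $S^{m,l}$ and handles the cells individually; both are valid.
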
 
\begin{proof}
\eqref{prelem:nochpt:i1}. 
Assume the contrary:  
There exists and some $\lambda>0$, $T \in \N$ and 
sequences $b_1^{(i)} \in 1..n_i$ and $b_2^{(i)} \in 1..n_i,~n_i,i\in \N$ 
with $b_2^{(i)}-b_1^{(i)}\geq \alpha n_i$,
such that with probability $\Delta>0$ we have 
\begin{equation}\label{eq:contra}
 \limsup_{i \in \N}\sum_{\substack {B \in B^{m,l}\\m,l\in 1..T}}\left|\nu(X_{b_1^{(i)}..b_2^{(i)}},B)-\rho(B)\right| > \lambda.
\end{equation}
>From the definition of 
$\nu(\cdot,\cdot)$ given by~\eqref{eq:nu}, it is easy to see that 
for all $B \in B^{m,l},~m,l \in \N$ and $b_1<b_2 \in 1 ..n$ we have 
\begin{align}
\left|\nu(X_{b_1..b_2},B)-\rho(B) \right|
&\leq \left|\left(1-\frac{m-1}{b_2-b_1}\right)\nu(X_{b_1..b_2},B)-\rho(B)\right|+\frac{m-1}{b_2-b_1}\notag\\
&\leq \frac{4(m-1)}{b_2-b_1} + \sum_{i=1}^2 \frac{b_i}{b_2-b_1}\left|\nu(X_{1..b_i},B)-\rho(B)\right|.\label{eq:numan1}
\end{align}
Fix $\epsilon>0$.  
For each $m,l \in 1..T$ we can find a 
finite subset $S^{m,l} \subset B^{m,l}$ such that 
\begin{equation}\label{sml}
\rho(S^{m,l}) \geq 1-\frac{\epsilon}{T^2w_mw_l}.
\end{equation} 
Since $\rho$ is stationary ergodic, 
for every $B \in S^{m,l}$, there exists some $N(B)$ 
such that with probability 1 for all $t \geq N(B)$ we have  
\begin{equation}\label{eq:nu-rho} 
\sup_{b \geq t}\dig{\rho}{1}{b} \leq \frac{\epsilon\rho(B)}{T^2w_m w_l}. 
\end{equation} 
Define $\zeta_0:=\mindisp{m,l\in1..T}\frac{\epsilon}{T^2w_m w_l}$.
(Note that in the particular case where $w_i=1/i(i+1),~i=m,l$,  we simply get $\zeta_0=4\epsilon/T^2$, 
but we keep this parameter in its general form, independently of the specific choice of $w_m$ and $w_l$.)
Let $\zeta:=\min\{\alpha, \zeta_0\}$ and 
observe that $\zeta > 0$. 
For every $m,l \in 1..T$ and all $t \in \N$ we have,
\begin{align}\label{eq:sublin}
\sup_{\substack{b_1 \leq \zeta t\\b_2-b_1 \geq \alpha t}}\frac{b_1}{b_2-b_1} \leq \frac{\zeta }{\alpha}\leq\frac{\epsilon}{\alpha T^2w_mw_l}.
\end{align}
Define $N:=\maxdisp{m,l=1..T}N(B)/\zeta.$
On the other hand, by \eqref{eq:nu-rho} for all $n \geq N$ we have  
\begin{align}\label{eq:b11} 
\sup_{b_1 > \zeta n}\dig{\rho}{1}{b_1} \leq \frac{\epsilon\rho(B)}{T^2w_m w_l}.  
\end{align}
Increase $N$ if necessary to have 
\begin{equation}\label{eq:msml}
\sum_{m,l=1}^Tw_mw_l \frac{m}{\alpha N} \leq \epsilon.
\end{equation}
For all $n \geq N$ we obtain 
\begin{align}
&\sup_{\substack{b_1,b_2 \in 1..n\\b_2-b_1 \geq \alpha n}}
\sum_{m,l=1}^Tw_mw_l\sum_{B \in B^{m,l}} \dig{\rho}{b_1}{b_2} \notag \\
&\leq \sup_{\substack{b_1,b_2 \in 1..n\\b_2-b_1 \geq \alpha n}}\sum_{m,l=1}^Tw_mw_l\sum_{B \in S^{m,l}}\dig{\rho}{b_1}{b_2} 
+\epsilon \label{ineq0}\\
&\leq \sup_{\substack{b_1,b_2 \in 1..n \\b_2-b_1 \geq \alpha n}}\sum_{m,l=1}^Tw_mw_l\sum_{B \in S^{m,l}}\frac{b_2}{b_2-b_1}
\left|\nu(X_{1..b_2},B)-\rho(B) \right|\notag \\
&~+\sup_{\substack{b_1 > \zeta n\\b_2-b_1 \geq \alpha n}}\sum_{m,l=1}^Tw_mw_l\sum_{B \in S^{m,l}} \frac{b_1}{b_2-b_1}\left|\nu(X_{1..b_1},B)-\rho(B) \right|\notag\\
&~+\sup_{{\substack{b_1\leq \zeta n \\ b_2-b_1 \geq \alpha n}}}\sum_{m,l=1}^Tw_mw_l\sum_{B \in S^{m,l}} \frac{b_1}{b_2-b_1}\left|\nu(X_{1..b_1},B)-\rho(B) \right|
+5\epsilon \label{ineq11}\\
& \leq\epsilon(3/\alpha+5), \label{ineq5}
\end{align}
where \eqref{ineq0} follows from \eqref{sml}, 
\eqref{ineq11} follows from \eqref{eq:numan1} and \eqref{eq:msml}, 
and \eqref{ineq5} follows from 
\eqref{eq:nu-rho}, \eqref{eq:b11}, \eqref{eq:sublin},
summing over the probabilities, 
and noting that $\frac{b_2}{b_2-b_1} \leq \frac{1}{\alpha}$ for all 
$ b_2-b_1\geq\alpha n$. 
Observe that \eqref{ineq5}  holds for any $\epsilon>0$, 
and in particular for $\epsilon \in (0,\frac{\lambda}{3/\alpha+5})$. 
As a result, in the latter case for all $n \geq N$ we have
$$
\sup_{\substack{i \in \N \\ n_i \geq n}}\sum_{\substack {B \in B^{m,l}\\m,l\in 1..T}}\left|\nu(X_{b_1^{(i)}..b_2^{(i)}},B)-\rho(B)\right| < \lambda,
$$
contradicting~\eqref{eq:contra}. This contradiction  implies~\eqref{prelem:nochpt:i1}. 

\eqref{prelem:nochpt:i1d}
Fix $\epsilon >0$, $\alpha \in (0,1)$. 
We can find some $T \in \N$ such that 
\begin{equation}\label{prelem:nochpt:i1d:ml}
\sum_{m,l=T}^\infty w_m w_l \leq \epsilon.
\end{equation}  
By   \eqref{prelem:nochpt:i1}, there exists some $N$ 
such that for all $n \geq N$ we have 
\begin{align}\label{prelem:nochpt:i1d:nus}
\sup_{\substack{b_1,b_2 \in 1..n\\|b_2 -b_1| \geq \alpha n}}
\sum_{m,l=1}^T\sum_{B \in B^{m,l}}\dig{\rho}{b_1}{b_2}\leq \epsilon.
\end{align}
\\
>From \eqref{prelem:nochpt:i1d:ml} and \eqref{prelem:nochpt:i1d:nus},
for all $n \geq N$ we have
\begin{align*} 
\sup_{\substack{b_1,b_2 \in 1..n\\|b_2 -b_1| \geq \alpha n}}
 \hat{d}(X_{b_1..b_2},\rho ) 
&\leq \sup_{\substack{b_1,b_2 \in 1..n\\|b_2 -b_1| \geq \alpha n}}
\sum_{m,l=1}^{T}w_mw_l\sum_{B \in B^{m,l}}\dig{\rho}{b_1}{b_2}+\epsilon \\
&\leq 2\epsilon
\end{align*}
and the statement \eqref{prelem:nochpt:i1d} of the lemma follows. 

\eqref{prelem:nochpt:ii}
Fix $\epsilon >0$, $\alpha \in (0,1)$. 
Without loss of generality assume that $b_2 > b_1$. 
Observe that, for every $b_1+\alpha n\leq b_2 \leq n$,
we have $\frac{b_1+b_2}{2}-b_1 =  b_2-\frac{b_1+b_2}{2} \geq \alpha n/2$. 
Therefore, by  \eqref{prelem:nochpt:i1d},
there exists some $N$, 
such that for all $n \geq N_1$ we have 
\begin{align*}
\sup_{\substack{ b_2-b_1 \geq \alpha n}}
\hat{d}\left(X_{b_1..\frac{b_1+b_2}{2}},\rho \right)\leq \epsilon,\\
\sup_{\substack{b_2-b_1 \geq \alpha n}}
\hat{d}\left(X_{\frac{b_1+b_2}{2}}..b_2,\rho \right)\leq \epsilon.
\end{align*}
It remains to use the definition of $\Delta_\x$ given by (\ref{defn:Delta}) and the triangle inequality to observe that 
\begin{align*} 
\sup_{\substack{b_2-b_1 \geq \alpha n }} \Delta_{\x}\left(b_1,b_2 \right)
& = \sup_{\substack{b_2-b_1 \geq \alpha n }}  \hat{d}\left(X_{b_1..\frac{b_1+b_2}{2}},X_{\frac{b_1+b_2}{2}..b_2} \right) \\
&\leq \sup_{\substack{b_2-b_1 \geq \alpha n}} 
\hat{d}\left(X_{b_1..\frac{b_1+b_2}{2}},\rho \right)+\hat{d}\left(X_{\frac{b_1+b_2}{2}}..b_2,\rho \right)\notag 
\leq 2\epsilon
\end{align*}
for all $n \geq N$, and  \eqref{prelem:nochpt:ii} follows.

\end{proof}
\begin{lem}\label{prelem:chpt:dist}
Let $\x \in \X^n$ have a change point at 
$\pi =\theta n $ 
for some $\theta \in (0,1)$ so that the segments   
$X_{1..\pi}$, $X_{\pi..n}$ are generated by 
$\rho$, $\rho'$ respectively.  
If $\rho$, $\rho'$ are stationary ergodic, 
for every $\zeta \in (0,\min\{\theta,1-\theta\})$   
with probability 1
we have
\begin{enumerate}
\renewcommand{\theenumi}{\roman{enumi}}
\renewcommand{\labelenumi}{(\theenumi)}
\item \label{prelem:chpt:dist:i} 
$\displaystyle\lim_{n \rightarrow \infty}
\sup_{\substack{b \in 1..(\theta-\zeta)n \\t \in \pi..(1-\zeta)n}}
\hat{d}\left(X_{b..t},\frac{\pi-b}{t-b}\rho+\frac{t-\pi}{t-b}\rho' \right)=0,$
\item \label{prelem:chpt:dist:iii} 
$\displaystyle\lim_{n \rightarrow \infty}
\sup_{\substack{b \in \zeta n..\pi \\t \in (\theta+\zeta)n..n}}
\hat{d}\left(X_{b..t},\frac{\pi-b}{t-b}\rho+\frac{t-\pi}{t-b}\rho'\right)=0.$
\end{enumerate}
\end{lem}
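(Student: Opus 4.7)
The plan is to reduce the estimate to an application of Lemma~\ref{prelem:nochpt:i}(ii) on each of the two homogeneous pieces of $X_{b..t}$ via a convex-combination decomposition of empirical frequencies across the change point.

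The first step is a pointwise decomposition of $\nu(X_{b..t},B)$ for $B \in B^{m,l}$. Splitting the count $(t-b-m+1)\nu(X_{b..t},B)$ by whether the length-$m$ window lies entirely in $X_{b..\pi-1}$, straddles the boundary $\pi$, or lies entirely in $X_{\pi+1..t}$, I get
\[
\nu(X_{b..t},B) = \tfrac{\pi-b-m+1}{t-b-m+1}\,\nu(X_{b..\pi-1},B) + \tfrac{t-\pi-m+1}{t-b-m+1}\,\nu(X_{\pi+1..t},B) + \tfrac{r_m(B)}{t-b-m+1},
\]
where the crossing remainder satisfies $\sum_{B\in B^{m,l}} |r_m(B)| \le m$. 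Writing $\lambda := (\pi-b)/(t-b)$ and comparing the combinatorial weights with $\lambda$ and $1-\lambda$ (which costs an additional $O(m/(t-b))$ per coefficient), I obtain, after summing against $w_mw_l$ and applying the triangle inequality to the target mixture $\lambda\rho+(1-\lambda)\rho'$,
\[
\hat d\bigl(X_{b..t},\lambda\rho+(1-\lambda)\rho'\bigr) \le \lambda\,\hat d(X_{b..\pi-1},\rho) + (1-\lambda)\,\hat d(X_{\pi+1..t},\rho') + \delta_n,
\]
with $\delta_n = O\!\bigl(\tfrac{1}{t-b}\sum_{m=1}^{m_n} w_m m\bigr) = O(\log m_n / n)$, which tends to zero uniformly over $b,t$ with $t-b \ge \zeta n$ since $m_n = o(n)$.

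The second step is a two-case analysis; fix an auxiliary $\alpha \in (0,\zeta)$ to be chosen at the end. In Case A, $t-\pi \ge \alpha n$: since $X_{1..\pi}$ is generated by the stationary ergodic $\rho$ and $|\pi-b|\ge \zeta n = \tfrac{\zeta}{\theta}\pi$, Lemma~\ref{prelem:nochpt:i}(ii) applied to $X_{1..\pi}$ gives $\sup_{b}\hat d(X_{b..\pi-1},\rho)\to 0$; similarly, $X_{\pi+1..n}$ is generated by the stationary ergodic $\rho'$ and $|t-\pi|\ge \alpha n = \tfrac{\alpha}{1-\theta}(n-\pi)$, so Lemma~\ref{prelem:nochpt:i}(ii) gives $\sup_{t}\hat d(X_{\pi+1..t},\rho')\to 0$, and the above bound vanishes uniformly. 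In Case B, $t-\pi < \alpha n$: then $1-\lambda = (t-\pi)/(t-b) \le \alpha/\zeta$, the first summand still vanishes by Lemma~\ref{prelem:nochpt:i}(ii), and the second is crudely bounded by $2(1-\lambda) \le 2\alpha/\zeta$ using $\hat d \le 2$. Taking $\alpha$ sufficiently small, the limsup over $b,t$ is bounded by any prescribed $\epsilon>0$, yielding (i). Part (ii) follows by the symmetric argument, with the roles of ``short piece'' and ``long piece'' interchanged between $\rho$ and $\rho'$.

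The main obstacle is the uniform control of the boundary-crossing contribution in the frequency decomposition: one must verify that both the $O(m)$ straddling windows and the perturbation of the weights $(\pi-b-m+1)/(t-b-m+1)$ from $\lambda$ give together a total-variation error of order $m/(t-b)$ per $(m,l)$-scale, so that after summation against $w_m w_l$ up to $m_n,l_n$ and division by $t-b \ge \zeta n$ the residual $\delta_n$ is uniformly $o(1)$. The rest is then a routine combination of the two-piece convexity bound with Lemma~\ref{prelem:nochpt:i}(ii).
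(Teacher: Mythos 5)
Your argument is correct in substance but takes a genuinely different route from the paper's at the one point where the lemma is delicate: controlling the piece of $X_{b..t}$ that lies on the far side of $\pi$ when $t$ (resp.\ $b$ in part (ii)) is arbitrarily close to $\pi$, so that this piece has sublinear length and its empirical frequencies need not converge. You handle this by observing that the short piece enters the convex decomposition with coefficient $(t-\pi)/(t-b)\le\alpha/\zeta$, so the crude bound $\hat d\le 2$ suffices, at the cost of an auxiliary parameter $\alpha$ and a two-case limsup argument. The paper instead never estimates the short piece directly: it writes the frequency count over $X_{\pi..t}$ as the difference of the counts over the two \emph{long} suffixes $X_{\pi..n}$ and $X_{t..n}$ (inequality \eqref{prelem:chpt:dist:2nd}), whose normalizing factors cancel against the convex weight $\tfrac{t-\pi-m+1}{t-b}$, yielding a single uniform bound with no case split. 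Your version is arguably more elementary and, in part (ii), does not even use the hypothesis $b\ge\zeta n$; the paper's version avoids the extra $\epsilon$--$\alpha$ layer. Two technical wrinkles in your write-up deserve attention, both repairable by the same device the paper uses (truncate the sum over $m,l$ at a fixed $T$ with tail $\le\epsilon$ and invoke part (i) of Lemma~\ref{prelem:nochpt:i} on the truncated sums, rather than part (ii) on whole $\hat d$'s): first, $\hat d(X_{b..t},\cdot)$ and $\hat d(X_{b..\pi-1},\rho)$ carry different cutoffs $m_{t-b}\ne m_{\pi-b}$, so your convexity inequality between them is not literally valid term-by-term; second, your residual bound $\delta_n=O(\log m_n/n)$ is only $o(1)$ if $\log m_n=o(n)$, which the definition of $\hat d$ does not guarantee since $m_n$ is an arbitrary sequence tending to infinity. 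Neither issue affects the soundness of the overall strategy.
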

\begin{proof}
\eqref{prelem:chpt:dist:i} Fix $\epsilon >0$, $\theta \in (0,1)$, $\zeta \in (0,\min\{\theta,1-\theta\})$. 
There exists some $T \in \N$ such that 
$\sum_{m,l=T}^{\infty} w_mw_l \leq \epsilon. $
By the definition of $\nu(\cdot,\cdot)$, for all 
$b \in 1..(\theta-\zeta)n,~t \in \pi..(1-\zeta)n$ and all 
$B \in B^{m,l}~m,l \in 1..T$ we have 
\begin{align}\label{prelem:chpt:dist:2nd}
\dig{\rho'}{\pi}{t}
&\leq \frac{n-\pi}{t-\pi-m+1}\left|\nu(X_{\pi..n},B)-\rho'(B)\right|\\
&\quad+\frac{n-t}{t-\pi-m+1}\left| \nu(X_{t..n},B)-\rho'(B)\right|
+\frac{3(m-1)}{t-\pi-m+1}.\notag
\end{align}
Furthermore, using  the fact that $\nu(\cdot,\cdot)\leq 1$, 
for all $b \in 1..(\theta-\zeta)n,~t \in \pi..(1-\zeta)n$ and
$B \in B^{m,l}~m,l \in 1..T$ we obtain 
\begin{align}
&\left|\nu(X_{b..t},B)-\frac{\pi-b}{t-b}\rho(B)-\frac{t-\pi}{t-b}\rho'(B)\right|\label{prelem:chpt:dist:chain:eq4}  \\
& \leq  \frac{\pi-b}{t-b}\dig{\rho}{b}{\pi}+\frac{t-\pi-m+1}{t-b}\dig{\rho'}{\pi}{t} 
+\frac{3(m-1)}{t-b}.\notag 
\end{align}
By Part \eqref{prelem:nochpt:i1} of Lemma~\ref{prelem:nochpt:i},
there exists some $N'$ 
such that for all $n \geq N'$ 
we have  
\begin{equation}\label{prelem:chpt:dist:1st}
\sup_{\substack{b \in 1..(\theta-\zeta)n}}\sum_{m,l=1}^Tw_mw_l\sum_{ B \in B^{m,l}}\dig{\rho}{b}{\pi} \leq \epsilon. 
\end{equation}
Similarly, $n-t\geq \zeta n$ for all $t \in \pi..(1-\zeta)n$. Therefore, 
by Part \eqref{prelem:nochpt:i1} of Lemma~\ref{prelem:nochpt:i}, there exists some $N''$
such that for all $n \geq N''$ we have 
\begin{equation}\label{prelem:chpt:dist:2nd0}
\sup_{\substack{t \in \pi..(1-\zeta)n}} \sum_{m,l=1}^Tw_mw_l\sum_{B \in B^{m,l}}
\dig{\rho'}{t}{n} \leq \epsilon. 
\end{equation}
Since $t-b \geq \zeta n$ for all $b \in 1..(\theta-\zeta)n,~t \in \pi..(1-\zeta)n$, 
we have  
$\frac{n}{t-b}\leq \frac{1}{\zeta}$.
For all $n \geq \frac{T}{\epsilon \zeta},~m \in 1..T$, 
$b \in 1..(\theta-\zeta)n$ and $t \in \pi..(1-\zeta)n$
we have 
$\frac{m-1}{t-b}\leq \frac{m}{\zeta n}\leq \epsilon.$ 
Let $N := \max\{N',N'',\frac{T}{\epsilon \zeta}\}$. 
By \eqref{prelem:chpt:dist:2nd}, \eqref{prelem:chpt:dist:chain:eq4}, \eqref{prelem:chpt:dist:1st}, \eqref{prelem:chpt:dist:2nd0},  
for all $n \geq N$ we have  
{\begin{align*}
\sup_{\substack{b \in 1..(\theta-\zeta)n \\t \in \pi..(1-\zeta)n}}
&\sum_{m,l=1}^Tw_{m,l}\sum_{ B \in B^{m,l}}\left|\nu(X_{b..t},B)-\frac{\pi-b}{t-b}\rho(B)-\frac{t-\pi}{t-b}\rho'(B) \right| 
\leq 3\epsilon(2+\frac{1}{\zeta}).
\end{align*}} 
By this, and the definition of $T$, 
for all $n \geq N$ 
we have
$\sup_{\substack{b \in 1..(\theta-\zeta)n \\t \in \pi..(1-\zeta)n}}
\hat{d}(X_{b..t},\frac{\pi-b}{t-b}\rho+\frac{t-\pi}{t-b}\rho')
\leq \epsilon(7+\frac{3}{\zeta})$
and Part \eqref{prelem:chpt:dist:i} 
follows. 
The proof of the second part 
is analogous.
\end{proof}
\begin{lem}\label{lem2}
Consider a sequence $\x \in \X^n,~n \in \N$ with $\kappa$ change points.
Let $\boldsymbol{b}:=b_1,\dots,b_{|\boldsymbol{b}|} \in \cup_{i=1}^n\{1..n\}^i$, 
be a sequence of indices with  
$\mindisp{i \in 1..|\boldsymbol b|-1}b_{i+1}-b_{i} \geq \alpha n$ for some $\alpha \in (0,1)$, such that 
for some $\zeta \in (0,1)$ we have
$\inf_{\substack{k=1..\kappa, b \in \boldsymbol{b}}}|\frac{1}{n}b-\theta_{k}| \geq \zeta .$
\begin{enumerate}
\renewcommand{\theenumi}{\roman{enumi}}
\renewcommand{\labelenumi}{(\theenumi)}
\item \label{lem2:i} 
With probability 1 we have
$\displaystyle \lim_{n \rightarrow \infty }\inf_{k\in1..\kappa}\Delta_{\x}(\lft{k}, \rt{k}) \geq \delta \zeta $
where $\lft{k}:= \displaystyle \maxdisp{\substack{b \leq n\theta_k}}\{ b \in \boldsymbol b\}$ and  
$\rt{k}:= \displaystyle \maxdisp{\substack{b > n\theta_k}} \{b \in \boldsymbol b\}$
denote the elements of $\boldsymbol b$ 
that appear immediately to the left and to the right of $\lfloor n\theta_k \rfloor$ respectively, 
and $\delta$ is the minimum distance between the distinct distributions that generate $\x$.
\item \label{lem2:ii} 
Assume that we additionally have 
$[\frac{1}{n}\lft{k}-\alpha,\frac{1}{n}\rt{k}+\alpha] \subseteq [\theta_{k-1},\theta_{k+1}]$. 
With probability 1 we obtain
$\displaystyle \lim_{n \rightarrow \infty}\sup_{k \in 1..\kappa}
|\frac{1}{n}\Phi_{\x}(\lft{k}, \rt{k},\alpha) -\theta_k| =0.$
\end{enumerate}
\end{lem}
\begin{proof}
(\ref{lem2:i}). Fix some $k \in 1..\kappa$. 
Define $c_k:=\frac{\llft{k}+\rrt{k}}{2}$. 
To prove Part \eqref{lem2:i}, 
we show that with probability $1$ for large enough $n$, we have  
\begin{equation}\label{objective1}
\hat{d}\left(X_{\llft{k}..c_k},X_{c_k..\rrt{k}} \right) \geq \delta\zeta.
\end{equation}
Fix $\epsilon >0$. 
Let $\pi_k:=\lfloor n\theta_k\rfloor, ~k=1..\kappa$. 
To prove \eqref{objective1}
for the case where   $\pi_k \leq c_k$ we proceed as follows. 
As follows from the assumption of the lemma
and the definition of $\lft{\cdot}$ and $\rt{\cdot}$, 
we have
$\rrt{k}-\llft{k} \geq n\alpha$,
so that 
$\rt{k}-c_k \geq \frac{\alpha}{2} n.$ 
Since by assumption of the lemma we have $\inf_{\substack{k=1..\kappa, b \in \boldsymbol{b}}}|\frac{1}{n}b-\theta_{k}| \geq \zeta$, it follows that $\pi_{k+1}-c_k \geq (\zeta+\frac{\alpha}{2})n.$
Moreover, from the same assumption
we have 
 $\frac{\pi_k-\llft{k}}{c_k-\llft{k}} \geq \frac{\pi_k-\llft{k}}{n} \geq \zeta.  $
Therefore, we obtain
\begin{align}\label{lem2:drhos}
& d\left(\rho_{k+1},\frac{\pi_k-\llft{k}}{c_k-\llft{k}}\rho_k+\frac{c_k-\pi_k}{c_k-\llft{k}}\rho_{k+1} \right)
= \frac{\pi_k-\llft{k}}{c_k-\llft{k}}d \left(\rho_{k+1},\rho_k \right) \geq \delta \zeta.
\end{align}
>From the definition of $\lft{k}$ and $\rt{k}$, and 
our assumption that $\pi_k \leq c_k$, 
the segment $X_{c_k..\rrt{k}}$ is fully generated by $\rho_{k+1}$. 
by Part~\eqref{prelem:nochpt:i1d} of Lemma~\ref{prelem:nochpt:i}, 
there exists some $N_1$ such that for all 
$n \geq N_1$ we have  
\begin{align}\label{lem2:i2ndhalf}
\hat{d}\left(X_{c_k..\rrt{k}},\rho_{k+1} \right) \leq \epsilon.
\end{align}
by Part \eqref{prelem:chpt:dist:i} of Lemma~\ref{prelem:chpt:dist} there exists some $N_2$ 
such that for all 
$n \geq N_2$ we have 
\begin{align}\label{lem2:i1sthalf}
\hat{d}\left(X_{\llft{k}..c_k},\frac{\pi_k-\llft{k}}{c_k-\llft{k}}\rho_k+\frac{c_k-\pi_k}{c_k-\llft{k}}\rho_{k+1}\right)\leq \epsilon.
\end{align}
By \eqref{lem2:i1sthalf} and  \eqref{lem2:drhos}
for all $n \geq \max_{i=1,2}N_i$ we obtain
\begin{align} 
&\Delta_{\x}\left(\llft{k},\rrt{k} \right) 
\geq  \hat{d}\left(X_{\llft{k}..c_k},\rho_{k+1} \right) - \hat{d}\left(X_{c_k..\rrt{k}},\rho_{k+1}\right)\notag \\  
&\geq  d\left(\rho_{k+1},\frac{\pi_k-\llft{k}}{c_k-\llft{k}}\rho_k+\frac{c_k-\pi_k}{c_k-\llft{k}}\rho_{k+1}\right)\label{lem2:final}\\ &\quad- \hat{d}\left(X_{\llft{k}..c_k}, \frac{\pi_k-\llft{k}}{c_k-\llft{k}}\rho_k+\frac{c_k-\pi_k}{c_k-\llft{k}}\rho_{k+1} \right) 
- \hat{d}\left(X_{c_k..\rrt{k}},\rho_{k+1} \right)
 \geq \delta \zeta -2\epsilon.\notag  
\end{align}
Since \eqref{lem2:final} holds for every $\epsilon>0$, 
this proves (\ref{objective1}) in the case where   $\pi_k \leq c_k$.
The proof for $\pi_k > c_k$ is analogous. 
Since \eqref{objective1} holds for all  $k \in 1..\kappa$, 
part \eqref{lem2:i} follows. 
\\
(\ref{lem2:ii}).~Fix some $k \in 1..\kappa$. 
Following the definition of $\Phi_\x$ given by (\ref{defn:Phi}) we have 
\begin{align*}
\Phi \left(\llft{k}-n\alpha,\rrt{k}+n\alpha,\alpha \right) 
: = \argmaxdisp{l' \in \llft{k}..\rrt{k}}~\hat{d}\left(X_{\llft{k}-n\alpha..l'},X_{l'..\rrt{k}+n\alpha}\right).
\end{align*}
We show that 
for any $\beta \in (0,1)$, with probability~$1$ for large enough $n$
we have  
\begin{align}
 \hat{d}\left(X_{\llft{k}-n\alpha..l'},X_{l'..\rrt{k}+n\alpha} \right)
 < \hat{d}\left(X_{\llft{k}-n\alpha..\pi_k},X_{\pi_k..\rrt{k}+n\alpha} \right).\label{objective}
\end{align} 
for all $l' \in \llft{k} .. (1-\beta) \pi_k \cup \pi_k(1+\beta)..\rrt{k}$. 
To prove (\ref{objective}) for $l' \in \llft{k} .. (1-\beta) \pi_k$ we proceed as follows. 
Fix some $\beta\in (0,1)$ and $\epsilon >0$. 
For all $l'\in\llft{k}..(1-\beta)\pi_k$ we have 
$\frac{\pi_k-l'}{\rrt{k}+n\alpha-l'}\geq \beta.$
Hence, by the definitions of $\hat{d}$ and $\delta$ we obtain  
\begin{align}\label{lem2ii:DR}
d\left(\rho_k,\rho_{k+1} \right) &-d\left(\rho_k,\frac{\pi_k-l'}{\rrt{k}+n\alpha-l'}\rho_k+\frac{\rrt{k}+n\alpha-\pi_k}{\rrt{k}+n\alpha-l'}\rho_{k+1} \right)
\geq \beta \delta.
\end{align}
by Part \eqref{prelem:nochpt:i1d} of Lemma~\ref{prelem:nochpt:i}, 
there exists some $N_1$ such that 
for all $n \geq N_1$ 
we have 
\begin{align}
\sup_{\substack{l' \in \llft{k}..\pi_k}}
&\hat{d}\left(X_{\llft{k}-n\alpha..l'},\rho_k \right) \leq \epsilon, \label{lem2ii:firstDiff1}\\
&\hat{d}\left(X_{\pi_k..\rrt{k}+n\alpha},\rho_{k+1} \right) \leq \epsilon.\label{lem2ii:firstDiff2}
\end{align}
For all $l' \in \lft{k}..\pi_k$ we have $l'-\pi_{k-1} \geq \alpha n$. 
Also, $\rt{k}+n\alpha \in \pi_k+n\alpha..\pi_{k+1}$. 
Therefore by Part \eqref{prelem:chpt:dist:iii} of 
Lemma~\ref{prelem:chpt:dist} there exists some $N_2$ such that 
\begin{align}\label{lem2ii:2ndiff}
\sup_{\substack{l' \in \llft{k}..\pi_k}}
\hat{d}\left(X_{l'..\rrt{k}+n\alpha},\frac{\pi_k-l'}{\rrt{k}+n\alpha-l'}\rho_k+\frac{\rrt{k}+n\alpha-\pi_k}{\rrt{k}+n\alpha-l'}\rho_{k+1}\right)\leq \epsilon.
\end{align} 
By (\ref{lem2ii:firstDiff1}), (\ref{lem2ii:firstDiff2}) and the triangle inequality, 
for all $n \geq  \max_{i=1,2} N_i$ we obtain 
\begin{align}\label{lem2ii:firsthalf}
\hat{d}\left(X_{\llft{k}-n\alpha..\pi_k},X_{\pi_k..\rrt{k}+n\alpha}\right)
\geq \hat{d}\left(\rho_k, \rho_{k+1}\right)-2\epsilon.
\end{align}
By \eqref{lem2ii:firstDiff1}, \eqref{lem2ii:2ndiff}, and using 
the triangle inequality, for all $n \geq  \max_{i=1,2} N_i$ 
we obtain  
\begin{align}
\sup_{\substack{l'\in\llft{k}..(1-\beta)\pi_k}} 
&\hat{d}\left(X_{\llft{k}-n\alpha..l'},X_{l'..\rrt{k}+n\alpha}\right) \notag \\
\leq \sup_{\substack{l'\in\llft{k}..(1-\beta)\pi_k}} 
&d\left(\rho_k,\frac{\pi_k-l'}{\rrt{k}+n\alpha-l'}\rho_k+\frac{\rrt{k}+n\alpha-\pi_k}{\rrt{k}+n\alpha-l'}\rho_{k+1}\right)+2\epsilon \label{lem2ii:2ndhalf}.
\end{align}
Finally, from (\ref{lem2ii:firsthalf}), (\ref{lem2ii:2ndhalf}) and (\ref{lem2ii:DR}) 
for all $n \geq  \max_{i=1,2} N_i$ we obtain 
{\begin{align}\label{lem2ii:final}
\inf_{\substack{l'\in\llft{k}..(1-\beta)\pi_k}}
&\hat{d}\left(X_{\llft{k}-n\alpha..\pi_k},X_{\pi_k..\rrt{k}+n\alpha}\right)\notag \\ &-\hat{d}\left(X_{\llft{k}-n\alpha..l'},X_{l'..\rrt{k}+n\alpha}\right)\geq \beta\delta-4\epsilon. 
\end{align}}

\noindent Since (\ref{lem2ii:final}) holds for every $\epsilon>0$, 
this proves (\ref{objective})  for $l' \in \llft{k} .. (1-\beta) \pi_k,~k\in1..\kappa$. 
The case where   $ l' \in (1+\beta)\pi_k..\rrt{k}$ is analogous; 
part \eqref{lem2:ii} follows. 
\end{proof}
\begin{proof}[Proof of Theorem~\ref{thm:kk}]
On each iteration on $j\in 1..\log n$ the algorithm 
produces a set of estimated change points. We show that on some iterations these
estimates are consistent, and that estimates produced on the rest of the iterations are negligible.
To this end, we will partition the set of iterations into three sets as described in Steps 1-3 below.  

Define 
$\zeta(t,j):=  \min_{\substack{k \in 1..\kappa, i \in 0..\lfloor \frac{1}{\alpha_j}-\frac{1}{t+1}\rfloor}}|\alpha_j(i+\frac{1}{t+1})  - \theta_k|$, 
$j=1..\log n,~t \in 1..\kappa+1$; 
for all $i=0..\lfloor \frac{1}{\alpha_j}-\frac{1}{t+1}\rfloor$ we have 
$|b_i^{t,j}-\pi_k| \geq n\zeta(t,j). $
\\ \noindent\textbf{Step 1.} 
Fix $\epsilon >0$.
There exist some $J_\epsilon$ such that 
 $\sum_{j=J_\epsilon}^\infty w_j \leq \epsilon. $
$J_\epsilon$ is used to cut off the 
the iterations over  $j\in[1..\log n]$  where $\gap_j$ is too small
for the estimates of the distributional distance between the segments  to be consistent (the grids are too fine).  
These iterations are penalised by small weights $w_j$, so that
the corresponding candidate estimates become negligible (their combined weight is less than $\epsilon$). \\
\noindent\textbf{Step 2.}
Let  $J(\mingap):=-\log (\mingap/3)$, 
where $\mingap$ is given by \eqref{defn:thetamin}.  
The iterations on $j$ for $j\in[J(\mingap),J_\epsilon]$ correspond to iterations  where $\gap_j \in (0,\mingap]$  and,
moreover, the segments are long enough for the estimates to be consistent as we show below. 
For all $j \geq J(\mingap)$ and $t \in 1..\kappa+1$, 
and every $\theta_k,~k \in 1..\kappa$ 
we have $[\frac{1}{n}\lft{k}-\alpha_j, \frac{1}{n}\rt{k}+\alpha_j] \subseteq [\theta_{k-1},\theta_{k+1}]$ 
where $\lft{\cdot}$ and $\rt{\cdot}$ are defined in Lemma~\ref{lem2}.
For every fixed $j \in J(\mingap)..J_\epsilon$ we identify a 
subset $\T(j)$ of the iterations on $t =1..\kappa+1$ at
which the change point parameters $\theta_k,~k=1..\kappa$ are estimated consistently 
and the performance scores $\gamma(t,j),~j \in J(\mingap)..J_\epsilon,~t\in \T(j)$ are bounded below by a nonzero constant. 
Moreover, we show that if the set $\T'(j):=\{1..\kappa+1\} \setminus \T(j)$ is nonempty, 
the performance scores $\gamma(t,j)$ for
all $j \in J(\mingap)..J_\epsilon$ and $t \in \T'(j)$ 
are arbitrarily small. \\
\textbf{i.} To define $\T(j)$ we proceed as follows. 
For every fixed $j \in J(\mingap)..J_\epsilon$, 
for  every $\theta_k,~k=1..\kappa$ we can uniquely define $q_k \in \N$ and $ p_k \in [0,\alpha_j)$ so that 
$\theta_k = q_k\alpha_j+p_k$.
Therefore, for any $p \in [0,\alpha_j)$ with $p \neq p_k,~k=1..\kappa$, 
we have 
$ \inf_{\substack{k=1..\kappa,~i \in \N\cup\{0\}}} |i\alpha_j+p-\theta_k|>0$.
Since we can only have $\kappa$ distinct residues $p_k,~k=1..\kappa$, 
any set of $\kappa+1$ different elements of $[0,\alpha_j)$
contains at least one element $p'$ such that $p'\neq p_k,~k=1..\kappa$. So, for every $j \in J({\mingap})..J_\epsilon$  
there exists at least one $t \in 1..\kappa+1$ 
such that $\zeta(t,j)>0$.
For every $j \in J(\mingap)..J_{\epsilon}$, 
define 
\begin{equation}\label{eq:T}
\T(j):=\left\{t \in 1..\kappa+1:\zeta(t,j)>0 \right\}.
\end{equation}
Let 
$ \bar{\zeta}(j) := \min_{t \in \mathcal T(j)}\zeta(t,j)$
and define
$\zeta_{\min}:= \inf_{j \in J(\mingap)..J_{\epsilon}}\bar{\zeta}(j)$.
Note that 
$\zeta_{\min}>0$.
By Part \eqref{lem2:i} of Lemma~\ref{lem2}, 
for all $j \in J(\mingap)..J_{\epsilon}$ there exists some $N_1(j)$ 
such that for all $n \geq N_1(j)$ we have 
\begin{equation}\label{thm:constJ1}
\inf_{t\in\T(j)}\gamma(t,j) \geq \delta\bar{\zeta}(j), 
\end{equation}
where $\delta$ is the minimum  
distance between the distinct distributions. 
As specified by Algorithm~\ref{alg:kk} we have 
$\eta := \displaystyle \sum_{j=1}^{\log n}\sum_{t=1}^{\kappa+1}w_j \gamma(t,j)$. 
By (\ref{thm:constJ1}) 
for all $n \geq N_1(J_{\mingap})$ we have
\begin{equation}\label{etabound}
\eta \geq w_{J(\mingap)}\delta\bar{\zeta}(J_{\mingap}),
\end{equation} 
which does not depend on $\epsilon$. 
By Part \eqref{lem2:ii} 
of Lemma~\ref{lem2},
there exists some $N_2(j)$ such that for all $n \geq N_2(j)$ we have 
\begin{equation}\label{thm1:eq:chptconst}
\sup_{\substack{k \in 1..\kappa, t \in 1..\T(j)}}\frac{1}{n}\left|\hat{\pi}_k^{t,j}-\pi_k \right|\leq \epsilon.
\end{equation}
\begin{figure}
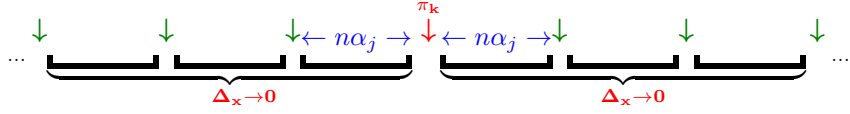

\begin{align*}
&~~~~\underset{\dots}{\textcolor{white}{\dots}}\overset{\darkgreen{\displaystyle \boldsymbol \downarrow}}{~}
\underset{\red{\Delta_x\rightarrow 0}}
{\underbrace{{{\underbracket[2pt]{\textcolor{white}{\leftarrow n\alpha_j \rightarrow}}}}
\overset{\darkgreen{\displaystyle \boldsymbol \downarrow}}{~}
{\underbracket[2pt]{\textcolor{white}{\leftarrow n\alpha_j \rightarrow}}}
\overset{\darkgreen{\displaystyle \boldsymbol \downarrow}}{~}
{\underbracket[2pt]{\textcolor{blue}{\leftarrow n\alpha_j \rightarrow}}}}
\overset{\red{\pi_k}}{\overset{\red{\displaystyle \boldsymbol \downarrow}}{~}}}
\underset{\red{\Delta_x\rightarrow 0}}
{\underbrace{{{\underbracket[2pt]{\textcolor{blue}{\leftarrow n\alpha_j \rightarrow}}}}
\overset{\darkgreen{\displaystyle \boldsymbol \downarrow}}{~}
{\underbracket[2pt]{\textcolor{white}{\leftarrow n\alpha_j \rightarrow}}}
\overset{\darkgreen{\displaystyle \boldsymbol \downarrow}}{~}
{\underbracket[2pt]{\textcolor{white}{\leftarrow n\alpha_j \rightarrow}}}}
\overset{\darkgreen{\displaystyle \boldsymbol \downarrow}}{~}}
\underset{\dots}{\textcolor{white}{\dots}}
\end{align*}
\caption{The case considered in Step~2.ii of the proof: $\lambda_j < \mingap$ and since the algorithm sets the spacing $\alpha_j$
between consecutive boundaries to $\lambda_j/3$, every three consecutive segments contain {\em at most one change point}. 
In the particular case depicted, one of the grid boundaries
lies exactly on some change point $\pi_k$. 
As follows from Step~2.ii, the grid score $\gamma$ assigned to 
such iterations  
converges to zero. 
}
\label{fig:hiitt}
\end{figure}

\noindent \textbf{ii.} Define $\T'(j):=\{1..\kappa+1\} \setminus \T(j)$ for $j \in J(\mingap)..J_{\epsilon}$, where $\T(j)$ is given by \eqref{eq:T}. 
It may be possible for the set $\T'(j)$ to be nonempty on some iterations on $j \in J(\mingap)..J_\epsilon$. 
Observe that by definition, for all $j \in J(\mingap)..J_\epsilon$ such that $\T'(j)\neq \emptyset$, we have 
$\max_{t \in \T'(j)} \zeta(t,j)=0$.  
This means that on each of these iterations, there exists some $\pi_k$ for some $k \in 1..\kappa$ 
such that $\pi_k =  b$ for some grid boundary
$$b \in \left\{b_i^{t,j}:=n\alpha_j(i+\frac{1}{t+1}),~i = 0.. \lfloor \frac{1}{\alpha_j}-\frac{1}{t+1}\rfloor ,~\alpha_j=\lambda_j/3,~t \in \T'(j)\right\}$$ where the boundaries are specified by Line~\ref{alg:bounds} of Algorithm~\ref{alg:kk}. 
Since $\gap_j \leq \mingap$ for all $j \in J(\mingap)..J_{\epsilon}$, and that $b=\pi_k$ 
we have  
$b..b+n\gap_j \subseteq \pi_k..\pi_{k+1}$ and $b-n\gap_j \subseteq \pi_{k-1}..\pi_k$. 
That is, the segments 
$X_{b..b+n\lambda_j}$ and $X_{b-n\lambda_j..b}$ 
are between two consecutive change points and are thus each  generated by a single process distribution. 
Following Lines \ref{alg:bounds} to \ref{alg:gamma_l} of Algorithm~\ref{alg:kk}, it is easy to see that 
in this case $\gamma(t,j)$ corresponds to $\max\{\Delta_\x(b,b+n\lambda_j), \Delta_\x(b-n\lambda_j, b)\}$. 
Since $b=\pi_k$, by 
Part \eqref{prelem:nochpt:ii} of Lemma~\ref{prelem:nochpt:i} 
there exists some $N_3(j)$ such that for all $n \geq N_3(j)$ we have  
$\max \{\Delta_{\x}(b-n\gap_j,b),\Delta_{\x}(b,b+n\gap_j)\} \leq \epsilon.$
Thus, for every $j \in J(\mingap)..J_{\epsilon}$ and all $n \geq N_3(j)$ 
we have 
\begin{equation}\label{thm:kk:gamma2}
\sup_{t\in \T'(j) \neq \emptyset} \gamma(t,j) \leq \epsilon.
\end{equation}
This scenario is depicted in Figure~\ref{fig:hiitt}. 

\noindent\textbf{Step 3.}
Consider,  $j =1..J(\mingap)-1$. 
It is desired for a grid to be such that 
every three consecutive segments contain at most one change point.
This property is not satisfied for $j = 1..J(\mingap)-1$ since, by definition, 
on these iterations we have $\alpha_j > \gap_j/3$. 
We show that for all these iterations, the performance score $\gamma (t,j),~1..\kappa+1$ 
becomes arbitrarily small; see Figure~\ref{fig:desire}. 
For all  $j=1.. J(\mingap)-1$ and $t=1..\kappa+1$, define the set of intervals
$\mathcal S^{t,j}:= \{(b_i^{t,j},b_{i+3}^{t,j}):i=0..\lfloor\frac{1}{\alpha_j}-\frac{1}{t+1}\rfloor-3\}$ and 
consider its partitioning into 
$$\mathcal S_l^{t,j} := \left\{ \left(b_{l+3i'}^{t,j},b_{l+3(i'+1)}^{t,j} \right):~i'=0..\frac{1}{3}\left( \left\lfloor\frac{1}{\alpha_j}-\frac{1}{t+1} \right\rfloor-l \right)\right\},~l=0..2.$$
Observe that, by construction, for every fixed $l=0..2$, every pair of indices $(b,b') \in \mathcal S_l^{t,j} $
specifies a segment $X_{b..b'}$ of length $3n\alpha_j$ 
and the elements of $\mathcal S_l^{t,j}$ index non-overlapping segments of $\x$.  
Since for all $j =1..J(\mingap)-1$ we have 
$\alpha_j > \gap_j/3,~j \in 1..J(\mingap)-1$ and $t\in1..\kappa+1$,
there exists some $(b,b') \in \mathcal S^{t,j}$ such that $X_{b..b'}$      
contains more than one change point. 
Since there are exactly $\kappa$ change points, 
in at least one of the partitions $\mathcal S^{t,j}_l$ for some $l \in 0..2$ 
we have that 
within any set of $\kappa$ segments 
there exists at least one segment that contains no change points. 
Note that, as specified by Lines~\ref{alg:bounds} - \ref{alg:gamma} of Algorithm~\ref{alg:kk}, we have
$$\gamma(t,j):=\min_{l=0..2} \left\{\Delta_{\x}(b,b'): (b,b') \in \mathcal S^{t,j}_l~\st~\left| \left\{(a,a') \in \mathcal S^{t,j}_l: \Delta_{\x}(a,a')>\Delta_{\x}(b,b') \right\} \right|=\kappa-1 \right\}.$$ Therefore, by Part~\eqref{prelem:nochpt:ii} of
Lemma~\ref{prelem:nochpt:i}, for every $j \in 1..J(\mingap)-1$ 
there exists some $N(j)$ such that for all $n \geq N(j)$ we have  
\begin{equation}\label{thm:kk:gamma}
\sup_{t \in 1..\kappa+1}\gamma(t,j) \leq \epsilon.
\end{equation} 
\begin{figure}
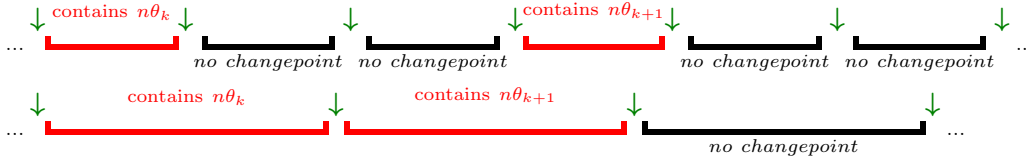

{
\begin{align*}
& \underset{\dots}{\textcolor{white}{\dots}}\overset{\darkgreen{\displaystyle \boldsymbol \downarrow}}{~}
      \red{\underset{}{\underbracket[2pt]{\overset{\text{contains}~n\theta_k }{\textcolor{white}{~\leftarrow \alpha n \rightarrow~}}}}}\overset{\darkgreen{\displaystyle \boldsymbol \downarrow}}{~}
            \underset{no~change point}{\underbracket[2pt]{\textcolor{white}{~\leftarrow \alpha n \rightarrow~}}}\overset{\darkgreen{\displaystyle \boldsymbol \downarrow}}{~}
      \underset{no~change point}{\underbracket[2pt]{\textcolor{white}{~\leftarrow \alpha n \rightarrow~}}}\overset{\darkgreen{\displaystyle \boldsymbol \downarrow}}{~}
            \red{\underset{}{\underbracket[2pt]{\overset{\text{contains}~n\theta_{k+1} }{\textcolor{white}{~\leftarrow \alpha n \rightarrow~}}}}}\overset{\darkgreen{\displaystyle \boldsymbol \downarrow}}{~}
                  \underset{no~change point}{\underbracket[2pt]{\textcolor{white}{~\leftarrow \alpha n \rightarrow~}}}\overset{\darkgreen{\displaystyle \boldsymbol \downarrow}}{~}
                  \underset{no~change point}{\underbracket[2pt]{\textcolor{white}{~\leftarrow \alpha n \rightarrow~}}}\overset{\darkgreen{\displaystyle \boldsymbol \downarrow}}{~}
      \underset{\dots}{\textcolor{white}{\dots}}\\
      & \underset{\dots}{\textcolor{white}{\dots}}\overset{\darkgreen{\displaystyle \boldsymbol \downarrow}}{~}
      \red{\underset{}{\underbracket[2pt]{\overset{\text{contains}~n\theta_k }{\textcolor{white}{~~\leftarrow \alpha n \rightarrow~~\leftarrow \alpha n \rightarrow~~}}}}}\overset{\darkgreen{\displaystyle \boldsymbol \downarrow}}{~}
            \red{\underset{}{\underbracket[2pt]{\overset{\text{contains}~n\theta_{k+1} }{\textcolor{white}{~~\leftarrow \alpha n \rightarrow~~\leftarrow \alpha n \rightarrow~~}}}}}\overset{\darkgreen{\displaystyle \boldsymbol \downarrow}}{~}
             \underset{no~change point}{\underbracket[2pt]{\textcolor{white}{~~\leftarrow \alpha n \rightarrow~~\leftarrow \alpha n \rightarrow~~}}}\overset{\darkgreen{\displaystyle \boldsymbol \downarrow}}{~}
      \underset{\dots}{\textcolor{white}{\dots}}
\end{align*}}
\caption{\textbf{Top.} Desired iteration where every three consecutive grid segments contain {\em at most one} change point. \textbf{Bottom.}  Undesired iteration where some groups of three consecutive grid segments may contain more than one change points. As follows from Step 2.i and Step~3, the algorithm indirectly distinguishes between the two scenarios. Specifically, in the former case the grid performance score $\gamma$ converges to a non-zero constant, while in the latter, it converges to zero.}
\label{fig:desire}
\end{figure}

\noindent {\bf Combining the steps.} 
Let $N:= \max \{\max_{j = 1..J(\mingap)-1} N(j), \max_{\substack{i=1..3 \\ j=J(\mingap)..J_{\epsilon}}}N_i(j)\}$
(note that the ranges of the $\max$ operators are finite, so $N$ is well defined).  
By \eqref{etabound}, the definition of $J_\epsilon$,  and that $\gamma(\cdot,\cdot) \leq 1$, 
for all $n \geq N$ we have 
\begin{align}\label{thm:lastThird}
\frac{1}{n\eta}\sum_{j=J_\epsilon}^{\log n}\sum_{t=1}^{\kappa+1}w_j\gamma(t,j) \left|\pi_k-\hat{\pi}_k^{t,j} \right|
\leq \frac{\epsilon(\kappa+1)}{w_{J(\mingap)}\delta\bar{\zeta}(J(\mingap))}. 
\end{align}
Note that  $\eta:=\sum_{j=1}^{\log n} \sum_{t=1}^{\kappa+1} w_j \gamma(t,j)$;
by \eqref{etabound}, \eqref{thm1:eq:chptconst} for all $n \geq N$ 
we have  
\begin{equation}\label{thm:middle1}
\frac{1}{n\eta}\sum_{j=J(\mingap)}^{J_{\epsilon}}\sum_{t\in \mathcal T(j)}w_j\gamma(t,j) \left|\pi_k-\hat{\pi}_k^{t,j} \right| \leq \epsilon. 
\end{equation}
By \eqref{etabound}, \eqref{thm:kk:gamma2} and \eqref{thm:kk:gamma} for all $n \geq N$
we obtain  
\begin{align}
&\frac{1}{n\eta}\sum_{j=J_\epsilon}^{\log n}\sum_{t \in \T'(j)}w_j\gamma(t,j)|\pi_k-\hat{\pi}_k^{t,j}|\leq \frac{\epsilon(\kappa+1)}{w_{J(\mingap)}\delta\bar{\zeta}(J(\mingap))},\label{thm:middle} \\
&\frac{1}{n\eta}\sum_{j=1}^{J(\mingap)-1}\sum_{t=1}^{\kappa+1}w_j\gamma(t,j)|\pi_k-\hat{\pi}_k^{t,j}|  
\leq \frac{\epsilon(\kappa+1)}{w_{J(\mingap)}\delta\bar{\zeta}(J(\mingap))}.\label{thm:firstThird}
\end{align}
Let $\hat{\theta}_k(n):=\frac{\hat{\pi}_k}{n},k=1..\kappa$.
By \eqref{thm:lastThird}, \eqref{thm:middle1}, \eqref{thm:middle} and \eqref{thm:firstThird} 
we have 
\begin{align*}
|\hat{\theta}_k(n)-\theta_k| 
&\leq \frac{1}{n\eta}\sum_{j=1}^{J(\mingap)-1}\sum_{t=1}^{\kappa+1}w_j\gamma(t,j)|\pi_k-\hat{\pi}_k^{t,j}|\\
&~~+\frac{1}{n\eta}\sum_{j=J(\mingap)}^{J_{\epsilon}}\sum_{t \in \mathcal T(j)}w_j\gamma(t,j)|\pi_k-\hat{\pi}_k^{t,j}|\\
&~~+\frac{1}{n\eta}\sum_{j=J(\mingap)}^{J_{\epsilon}}\sum_{t \in \T'(j)}w_j\gamma(t,j)|\pi_k-\hat{\pi}_k^{t,j}|\\
&~~+\frac{1}{n\eta}\sum_{j=J_\epsilon}^{\log n}\sum_{t=1}^{\kappa+1}w_j\gamma(t,j)|\pi_k-\hat{\pi}_k^{t,j}|  \\
&\leq \epsilon\left(1+\frac{3(\kappa+1)}{w_{J(\mingap)}\delta\bar{\zeta}(J(\mingap))}\right).
\end{align*}  
Since the choice of $\epsilon$ is arbitrary, the statement of the theorem follows.
\end{proof}
\subsection*{Acknowledgments}
\noindent This work was supported by the French Ministry of Higher Education and Research, by FP7/2007-2013 under grant agreements 270327 (CompLACS) and 216886 (PASCAL-2), 
 by the Nord-Pas-de-Calais Regional Council and FEDER
through CPER 2007-2013, and by an INRIA Ph.D. grant to Azadeh Khaleghi.

\end{document}